\documentclass[conference]{IEEEtran}
\IEEEoverridecommandlockouts
\usepackage{times}

\usepackage[numbers]{natbib}
\usepackage{multicol}
\usepackage{amsfonts}
\usepackage[bookmarks=true]{hyperref}
\usepackage{stfloats}

\usepackage{amsmath}
\usepackage{amssymb}  
\usepackage{amsthm}
\usepackage{bm}
\usepackage{todonotes}
\usepackage{multicol}
\usepackage[linesnumbered,ruled,noend]{algorithm2e}
\usepackage{xcolor}
\usepackage[most]{tcolorbox}
\usepackage{ifthen}
\usepackage{cuted}

\SetCommentSty{mycommfont}

\makeatletter
\newcommand{\removelatexerror}{\let\@latex@error\@gobble}
\makeatother

\usepackage[capitalise]{acro}

\DeclareAcronym{SLS}{
  short = SLS,
  long  = system level synthesis,
}

\DeclareAcronym{OFC}{
  short = OFC,
  long  = output-feedback control,
}

\DeclareAcronym{NLP}{
  short = NLP,
  long  = nonlinear programming,
}

\DeclareAcronym{NMPC}{
  short = NMPC,
  long  = nonlinear model predictive control,
}
\DeclareAcronym{MPC}{
  short = MPC,
  long  = model predictive control,
}
\DeclareAcronym{RNMPC}{
  short = RNMPC,
  long  = robust nonlinear model predictive control,
}
\DeclareAcronym{ISS}{
  short = ISS,
  long  = input-to-state stability,
}
\DeclareAcronym{RPI}{
  short = RPI,
  long  = robust positively invariant,
}
\DeclareAcronym{QP}{
  short = QP,
  long  = quadratic program,
}
\DeclareAcronym{SOCP}{
  short = SOCP,
  long  = second-order cone program,
}
\DeclareAcronym{SCP}{
  short = SCP,
  long  = sequential convex programming,
}
\DeclareAcronym{LQR}{
  short = LQR,
  long  = linear quadratic regulator,
}
\DeclareAcronym{iLQR}{
  short = iLQR,
  long  = iterative linear quadratic regulator,
}
\DeclareAcronym{ESA}{
  short = ESA,
  long  = European Space Agency,
}
\DeclareAcronym{GPU}{
  short = GPU,
  long  = graphics processing unit,
}
\DeclareAcronym{JIT}{
  short = JIT,
  long  = just-in-time,
}
\DeclareAcronym{LTI}{
  short = LTI,
  long  = linear time-invariant,
}
\DeclareAcronym{RCI}{
  short = RCI,
  long  = robust control invariant,
}
\DeclareAcronym{RL}{
  short = RL,
  long  = reinforcement learning,
}

\DeclareAcronym{RTI}{
  short = RTI,
  long  = real-time iteration,
}

\DeclareAcronym{SQP}{
  short = SQP,
  long  = sequential quadratic programming,
}

\DeclareAcronym{LTV}{
  short = LTV,
  long  = linear time-varying,
}
\DeclareAcronym{LQG}{short = LQG, long = linear–quadratic–Gaussian}

\usepackage{tikz}
\usetikzlibrary{arrows.meta,positioning,fit} 
\usetikzlibrary{calc,shapes.geometric}
\tikzset{>=Latex} 

\newcommand{\Safe}{\mathcal{S}}

\newcommand{\Y}{\mathcal{Y}}
\newcommand{\K}{\mathbf{K}}

\newcommand{\Z}{\mathbf{Z}}

\newcommand{\A}{\mathbf{A}}
\newcommand{\B}{\mathbf{B}}
\newcommand{\C}{\mathbf{C}}
\newcommand{\D}{\mathbf{D}}
\newcommand{\E}{\mathbf{E}}
\newcommand{\F}{\mathbf{F}}
\newcommand{\Iblk}{\mathbf{I}}

\newcommand{\nx}{n_\mathrm{x}}
\newcommand{\nuu}{n_\mathrm{u}}
\newcommand{\ny}{n_\mathrm{y}}

\newcommand{\y}{\mathrm{y}}

\newcommand{\Phixw}{\mathbf{\Phi}^{\mathrm{xw}}}
\newcommand{\Phixe}{\mathbf{\Phi}^{\mathrm{xe}}}
\newcommand{\Phiuw}{\mathbf{\Phi}^{\mathrm{uw}}}
\newcommand{\Phiue}{\mathbf{\Phi}^{\mathrm{ue}}}

\newcommand{\kinTm}{k \in \mathbb{N}_{0:T-1}}
\DeclareMathOperator{\proj}{p}

\newcommand{\M}{\mathbf{M}}

\renewcommand{\Y}{\mathbf{y}}

\newcommand{\R}{\mathbb{R}}

\newcommand{\T}{^\top}

\newcommand{\x}{\mathrm{x}}
\renewcommand{\u}{\mathrm{u}}
\newcommand{\w}{\mathrm{w}}
\newcommand{\e}{\mathrm{e}}

\renewcommand{\P}{\bm{\Phi}}

\newcommand{\Frob}{\mathrm{F}}

\newcommand{\defeq}{:=}

\newtheorem{remark}{Remark}
\newtheorem{assumption}{Assumption}
\newtheorem{proposition}{Proposition}

\begin{document}

\title{\fontsize{21}{24} \selectfont
VISION-SLS: Safe Perception-Based Control from Learned Visual Representations via System Level Synthesis
}

\author{
Antoine P. Leeman$^{\star}$$^{1}$ \and Shuyu Zhan$^{\star}$$^{2}$ \and Melanie N. Zeilinger$^{1}$ \and Glen Chou$^{3}$%
\thanks{$^{\star}$ Equal contribution. $^{1}$Institute for Dynamic Systems and Control, ETH Zürich {\tt\small \{aleeman, mzeilinger\}@ethz.ch}.
$^{23}$Georgia Institute of Technology, Schools of $^{2}$Interactive Computing, $^{3}$Aerospace Engineering and Cybersecurity \& Privacy {\tt\small \{szhan45, chou\}@gatech.edu}.
This work was supported by the European Space Agency OSIP 4000133352 and by an ETH Zurich Doc.Mobility Fellowship.}
}

\maketitle

\begin{abstract}
\looseness-1We propose VISION-SLS, a method for nonlinear output-feedback control from high-resolution RGB images which provides robust constraint satisfaction guarantees under calibrated uncertainty bounds despite partial observability, sensor noise, and nonlinear dynamics. To enable scalability while retaining guarantees, we propose: (i) a learned low-dimensional observation map from pretrained visual features with state-dependent error bounds, and (ii) a causal affine time-varying output-feedback policy optimized via System Level Synthesis (SLS). We develop a scalable, novel solver for the resulting nonconvex program that leverages sequential convex programming coupled with efficient Riccati recursions.
On two simulated visuomotor tasks (a 4D car and a 10D quadrotor) with $\ge 512 \times 512$ pixels and a 59D humanoid task with partial observability, our method
enables safe, information-gathering behavior that reduces uncertainty while guaranteeing constraint satisfaction with empirically-calibrated error bounds.
We also validate our method on hardware, safely controlling a ground vehicle from onboard images, outperforming baselines in safety rate and solve times.
Together, these results show that learned visual abstractions coupled with an efficient solver make SLS-based safe visuomotor output-feedback practical at scale.
The code implementation of our method is available at \href{https://github.com/trustworthyrobotics/VISION-SLS}{https://github.com/trustworthyrobotics/VISION-SLS}.
\end{abstract}

\IEEEpeerreviewmaketitle

\begin{figure*}[!b]
    \centering
    \vspace{-12pt}
    \includegraphics[width=\linewidth]{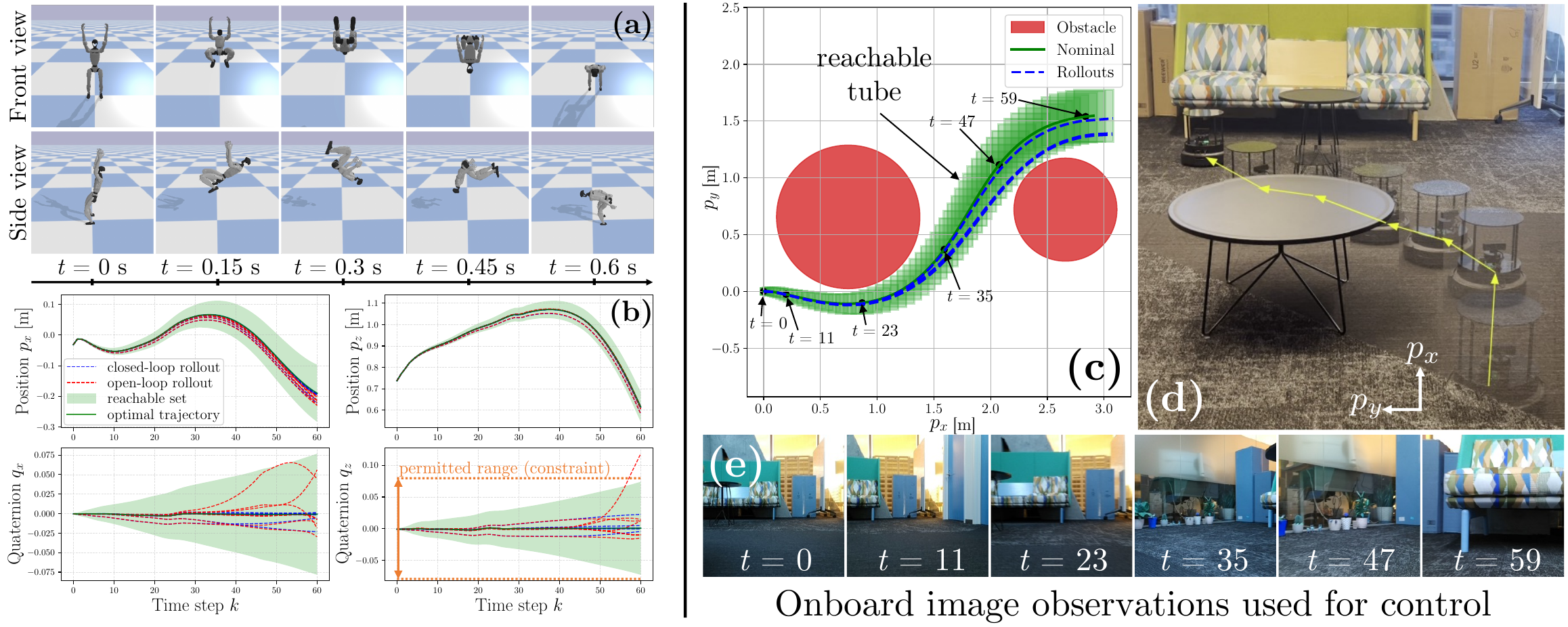}
    \vspace{-20pt}
    \caption{\looseness-1We stabilize a Unitree G1 humanoid (59 states) around a backflip trajectory that is jointly optimized, enabling robust constraint satisfaction. (a) Two views of a timelapse of the backflip trajectory that our method stabilizes. (b) A subset of the reachable tubes (green), closed-loop rollouts (blue), and open-loop rollouts (red). The optimized closed-loop controller via Alg. 1 keeps the system within the reachable tubes, whereas executing the nominal trajectory open-loop (i.e., without feedback) causes the system to exit the tubes and result in constraint violations (e.g., on the quaternion $q_z$). See App. \ref{app:humanoid} for reachable tubes for all states. (c) We evaluate our method on a physical TurtleBot robot, controlled via onboard images. Closed-loop rollouts obtained by using our controller (blue) stay within the tubes (green), avoiding obstacles (red). (d) Rollout time-lapse. (e) Onboard images collected to implement our controller.
    }\vspace{-15pt}
    \label{fig:humanoid_and_hardware}
\end{figure*}

\section{Introduction}

\looseness-1Reliable robot deployment demands safety and performance guarantees. Yet, perception-in-the-loop control, i.e., \textit{output-feedback} control, makes both control synthesis and safety verification difficult: robots must act under partial observability and noisy sensor feedback while efficiently stabilizing their nonlinear dynamics. These challenges are magnified when feedback comes from high-dimensional, non-smooth visual sensor outputs. Due to these difficulties, existing methods typically trade off scalable information-seeking under partial observability against certification of safety and performance. For instance, reinforcement learning (RL) \cite{levine2016end} and imitation learning \cite{chi2025diffusion} scales to high-dimensional visuomotor tasks but yields policies that are difficult to certify. In contrast, model-based methods like model predictive control (MPC) \cite{messerer2022dual,bertsekas2005dynamic}, Lyapunov-based control via sum-of-squares (SOS) \cite{singh2018robust, chou2023synthesizing, chou2022safe} or belief-space planning \cite{DBLP:conf/rss/PlattTKL10} offer guarantees but are mainly limited to linear, low-dimensional settings or rely on the classical separation principle between estimation and control, thereby precluding information-seeking behaviors that are often necessary in robotics for robustness to visual occlusions. 

\subsection{Contributions}

To close this gap, we present a \underline{VIS}ual \underline{O}utput-feedback \underline{N}onlinear \underline{S}ystem \underline{L}evel \underline{S}ynthesis (VISION-SLS), a method for scalable and uncertainty-aware output-feedback control from high-dimensional visual inputs with robust constraint satisfaction guarantees under calibrated uncertainty bounds.
We do not rely on the separation principle: by coupling estimation and control, our policies actively reduce state uncertainty by moving to regions of the state space with more informative sensor readings while maintaining safety guarantees.

To achieve this, our method informs \ac{SLS}-based robust control design with learned visual representations. Our key idea is to represent perception error as bounded disturbance on a reduced observation, so that partial observability, information gathering, and robust constraint satisfaction can be handled directly in the closed-loop response parameterization that is optimized by \ac{SLS}. This yields a scalable output-feedback synthesis pipeline for nonlinear systems with high-dimensional sensor data. Our main contributions are:
\begin{itemize}
    \item \looseness-1We develop a novel controller parameterization for partially observed \emph{nonlinear} systems by coupling nonlinear trajectory optimization with output-feedback \ac{SLS}. In contrast to prior nonlinear \ac{SLS} results, which focus on state feedback \cite{Leeman_2025_TAC,leeman2025guaranteed}, and prior output-feedback \ac{SLS} results \cite{zhou2023safe,DBLP:journals/arc/AndersonDLM19}, which focus on linear or LTV systems, our method enables robust information-gathering control with safety guarantees under nonlinear dynamics (Prop. \ref{prop:robust_constraint_nonlinear}).
    \item We construct a low-dimensional reduced observation from foundation-model features using a dynamics-aware observability loss, and calibrate a state-dependent overbound on the resulting reduction error (Sec.~\ref{sec:method_dino}--\ref{sec:method_error_bound}). This makes high-dimensional visual observations compatible with our robust output-feedback synthesis.
        \item We show that unconstrained output-feedback \ac{LQG} in the \ac{SLS} parameterization has a double-Riccati solution, enabling an efficient solver for the output-feedback subproblems in our method (Prop.~\ref{prop:riccati}). Our solver exploits this structure to scale synthesis to large systems.

    \item We validate the approach on a 4D car with RGB observations, a 10D quadrotor with RGB observations, a 59D humanoid with partial state measurements, and a hardware TurtleBot, showing safe information-gathering behavior, scalability to high-dimensional dynamics, and successful perception-based deployment on hardware.
    \end{itemize}

\section{Related Work}
\label{sec:related_work}
RL has achieved impressive performance in visuomotor control, scaling to high-dimensional tasks and enabling end-to-end pixel-to-torques policies \cite{levine2016end, DBLP:conf/nips/WatterSBR15, DBLP:conf/icra/ShahEKRL21, DBLP:conf/aistats/BanijamaliSGB018, DBLP:conf/icml/ZhangVSA0L19}. However, RL methods are notoriously data-hungry and difficult to certify, as the resulting policies are complex and lack interpretability. While visual foundation models (VFMs) \cite{DBLP:journals/corr/abs-2304-07193, DBLP:journals/corr/abs-2112-05814, zhou2024dino} have recently improved generalization by providing rich feature representations, coupling them with RL still leaves the overall perception–control pipeline without calibrated guarantees.

\Ac{SLS} offers a principled, scalable model-based control framework for large LTV systems for output feedback \cite{DBLP:journals/arc/AndersonDLM19,dean2021guaranteeing,sadraddini2020robust}, with recent extensions to nonlinear state-feedback under robust constraints \cite{Leeman_2025_TAC}. Belief-space planning (BSP) instead casts output-feedback as a partially-observable Markov decision process (POMDP) to model sensor noise and information gathering, but exact solutions are intractable and practical Gaussian/sampling-based approximations (e.g., based on the extended Kalman filter (EKF) or iterative LQG) typically do not consider constraints under uncertain dynamics \cite{DBLP:conf/rss/PlattTKL10,DBLP:conf/l4dc/DeglurkarLTSFT23}.
Finally, a line of work on safe control with learned models and visual inputs has emerged \cite{tong2023enforcingsafetyvisionbasedcontrollers, chou2023synthesizing}, but only heuristically encourages safety or is limited to low-dimensional systems. 

In summary, prior work couples learning with control from images but typically lacks end-to-end guarantees, assumes state reconstruction, or does not handle robust constraints under nonlinear dynamics. 
These limitations motivate our approach, which combines visual abstractions from VFMs with dynamic output-feedback control using \ac{SLS}, enabling scalable, information-gathering, and safe visuomotor control.
\paragraph{Notation} 
\looseness-1For vectors or matrices $a$ and $b$ with the same number of rows, we denote their horizontal concatenation by $[a,~b]$, and their vertical stacking as $(a,b) = [a\T,~b\T]\T$. Let $0_{n\times m}$ and $I_n$ denote the $n\times m$-zeros and $n\times n$ identity matrix, respectively, with dimensions inferred from context when omitted. We use $\nx$, $\nuu$, and $\ny$ for the dimensions of state, input, and observation. The matrix \(\mathbf{Z}\) denotes the block downshift operator, with \(I_{\nx}\) on the first subdiagonal and zeros elsewhere.
For sets, $A \oplus B$ denote the Minkowski sum. 
For a vector $a \in \mathbb{R}^n$, we denote as $a_i$ its $i$th entry. We use the unit $\ell_\infty$ ball $\mathcal{B}^n \!:= \{z\in\mathbb{R}^n : \|z\|_\infty \le 1\}$. We use $\mathbb{N}_{0:T} := \{0,\ldots,T\}$, and use the Frobenius norm as $\|\cdot\|_{\mathrm{F}}$.

\section{Problem Statement}
\label{sec:problem_statement}
We consider discrete-time, partially-observed, nonlinear systems, with states $x \in \mathbb{R}^{\nx}$, controls $u \in \mathbb{R}^{\nuu}$, and observations $y \in \mathbb{R}^{\ny}$,
\begin{subequations}\label{eq:system}
     \begin{align}
     x_{k+1} &= f(x_k,u_k) + E(x_k)w_k\label{eq:dyn} \\
     y_{k+1} &= h(x_k) + F(x_k)e_k,\label{eq:obs}
 \end{align}
 \end{subequations}
where $f: \mathbb{R}^{\nx} \times \mathbb{R}^{\nuu} \to \mathbb{R}^{\nx}$ are the dynamics and $h: \mathbb{R}^{\nx} \rightarrow \mathbb{R}^{\ny}$ are the delayed sensor outputs (IMU, camera images, etc), accounting for processing latency common in vision-based feedback systems.
The functions $E: \mathbb{R}^{\nx} \rightarrow \mathbb{R}^{\nx \times \nx}$, $F: \mathbb{R}^{\nx} \rightarrow \mathbb{R}^{\ny \times \ny}$ represent the effect of the process noise and measurement noise on the system dynamics and observations, respectively. 
\begin{assumption}
\label{ass:calibrated_uncertainty}
We assume bounded process and measurement noise (arising, e.g., from VFM error) with $w_t \in \mathcal{B}^{\nx}$ and $e_t \in \mathcal{B}^{\ny}$, with a system-specific scaling via $E(x_k)$ and $F(x_k)$.
\end{assumption}

\begin{assumption}
We assume the dynamics $f$ are twice continuously differentiable.
\end{assumption}
We aim to guarantee that all reachable states and controls satisfy the constraint 
\begin{equation}
\begin{aligned}
    \hspace{-5pt}\mathcal{S} =  \{(x,u)\in \R^{\nx + \nuu}|~c_i\T(x,u) + b_i \le 0, ~ i = 1, ..., n_\textrm{c}\}\label{eq:constraints}
    \end{aligned}
\end{equation}
despite the process noise and measurement noise.
To ensure robust constraint satisfaction\footnote{For simplicity, we present linear constraints; nonlinear constraints (e.g., obstacle avoidance) are handled via sequential linearizations, similarly in Section \ref{sec:method_scp} and \cite{zhan2025robustly}. Likewise, terminal constraints can also be imposed.} over the prediction horizon, we seek an output-feedback policy
that leverages the history of measurements to compute the control at time $k$. To obtain this policy,
we formulate the following constrained non-convex optimization problem for obtaining an output-feedback policy $\bm{\pi} \defeq (\pi_0(\cdot), \ldots, \pi_{T-1}(\cdot))$:
\begin{subequations}\label{eq:nonlinear_of}
\begin{align}
    \underset{\bm \pi}{\textrm{min}} &\quad J_\pi(\bm \pi) \\
    \textrm{s.t.} &\quad x_{k+1} = f(x_k, u_k) + E(x_k)w_k,\quad \kinTm,\\
        &\quad y_{k+1} = h(x_k) + F(x_k)e_k,\quad \kinTm,\\
        &\quad x_0 \in \mathcal{X}_0, \label{eq:nonlinear_of_constraint_initial_conditions}\\
    &\quad u_k = \pi_k(y_1, \ldots, y_{k}),\quad \forall \kinTm,\label{eq:nonlinear_of_policy} \\
        &\quad (x_k, u_k) \in \Safe, \forall w_k \in \mathcal{B}^{\nx},\forall e_k \in \mathcal{B}^{\ny}, \kinTm,\label{eq:nonlinear_of_constraint}
\end{align}
\end{subequations}
where $J_\pi(\cdot)$ is the objective and $\mathcal{X}_0$ is the set of possible initial states.
The optimization problem \eqref{eq:nonlinear_of} is challenging for three reasons:
\begin{itemize}
    \item The optimization ranges over all causal maps \eqref{eq:nonlinear_of_policy} from measurement histories to controls, and hence is infinite-dimensional unless parameterized.
    \item The constraints \eqref{eq:nonlinear_of_constraint} must hold for each of the infinitely-many possible disturbances $w_k$, $e_k$ in the sets $\mathcal{B}^{\nx}$, $\mathcal{B}^{\ny}$.
    \item The dynamics and observation functions $f$, $h$ are nonlinear and high-dimensional. In particular, the observations are assumed to be high-dimensional images, i.e., $y_k \in \mathbb{R}^{\ny}$ with $\ny$ in the order of $10^5$ or more.
\end{itemize}

In this paper, we develop a tractable pipeline, VISION-SLS, to address these challenges. We will first summarize background preliminaries on \ac{SLS} in Sec. \ref{sec:preliminaries} and overview our method in Sec. \ref{sec:method_overview}, which learns reduced observations and robustly synthesizes safe nonlinear output-feedback control (Sec. \ref{sec:method}) via an efficient solver based on Riccati recursions and sequential convex programming (Sec. \ref{sec:method_scp}).

\section{Preliminaries}\label{sec:preliminaries}

In this section, we review the \ac{SLS} framework for linear time-varying systems \cite{DBLP:journals/arc/AndersonDLM19},  which we leverage as a subroutine in Section~\ref{sec:method_sls} to design a dynamic output-feedback controller for the nonlinear system \eqref{eq:system}. 
We consider a one-step measurement-delayed \ac{LTV} system to capture perception latency typical of vision-based feedback
\begin{equation}
    \begin{aligned}
\label{eq:ltv}
x_0 &= \bar x_0 + \Xi \tilde w,\\
x_{k+1} &= A_k x_k + B_k u_k + E_k w_k,\\
 y_{k+1} &= C_k x_k + F_k e_k,
    \end{aligned}
\end{equation}
where $\tilde w \in \mathcal{B}^{\nx}$ denotes a normalized initial state uncertainty, and $\Xi \in \mathbb{R}^{\nx \times \nx}$ denotes the corresponding scaling matrix.
The corresponding nominal dynamics evolve according to
\begin{equation}
\label{eq:ltv_nominal}
\bar{x}_{k+1} = A_k \bar{x}_k + B_k \bar{u}_k, \quad \bar{y}_{k+1} = C_k \bar{x}_k.
\end{equation}

\noindent We define the stacked variables
$\mathbf{x}=(x_0,\ldots,x_T)$, 
$\mathbf{u}=(u_0,\ldots,u_{T-1})$, 
$\mathbf{y}=(y_1,\ldots,y_{T})$, 
$\mathbf{w}=(\tilde w,w_0,\ldots,w_{T-1})$, 
and $\mathbf{e}=(e_0,\ldots,e_{T-1})$,
and block-diagonal matrices as
$\A=\mathrm{blkdiag}(A_0,\ldots,A_{T-1})$, and similarly for $\B$, $\C$, and $\F$,
and $\E = \mathrm{blkdiag}(\Xi, E_0,\ldots,E_{T-1})$. We further define deviations
$\Delta\mathbf{x} := \mathbf{x}-\bar{\mathbf{x}}$,
$\Delta\mathbf{u} := \mathbf{u}-\bar{\mathbf{u}}$,
$\Delta\mathbf{y} := \mathbf{y}-\bar{\mathbf{y}}$, and by stacking the dynamics over the time horizon, the closed-loop error system is written compactly as
\begin{subequations}
\label{eq:stacked_LTV}
\begin{align}
\Delta\mathbf{x} &= \Z \A \Delta\mathbf{x} + \Z \B \,\Delta\mathbf{u} + \E \mathbf{w},\\
\Delta\mathbf{y} &= \Z \C \Delta\mathbf{x} + \F \mathbf{e}.
\end{align}
\end{subequations}
We consider an output-feedback controller of the form
\begin{equation}
\label{eq:output_feedback}
\mathbf{u} =\bar {\mathbf{u}}  + \K (\mathbf{y} - \Z \C \bar {\mathbf{x}}).
\end{equation}
Optimizing $\K$ directly is difficult since enforcing robust constraint satisfaction for the realized trajectory is nonconvex in $\K$. In contrast, \ac{SLS} designs the closed-loop response maps $\bm \Phi$ directly, mapping disturbances to state and input deviations
\begin{equation}\label{eq:closed_loop_phi}
	\begin{bmatrix} \Delta \mathbf{x} \\ \Delta \mathbf{u} \end{bmatrix} = \underbrace{\begin{bmatrix} \Phixw & \Phixe \\ \Phiuw & \Phiue \end{bmatrix}}_{\mathbf{\Phi}} 
    \begin{bmatrix}
        \mathbf{E} &\mathbf{0} \\\mathbf{0}& \mathbf{F}
    \end{bmatrix}
    \begin{bmatrix} \mathbf{w} \\ \mathbf{e} \end{bmatrix},
\end{equation}
where the matrices $\Phixw$, $\Phixe$, $\Phiuw$, and $\Phiue$ are block-lower-triangular.
Crucially, this output-feedback parameterization enables an efficient \textit{convex} controller synthesis with robust constraint satisfaction, as we will see next.

\begin{proposition}[Adapted from \cite{DBLP:journals/arc/AndersonDLM19, zhou2023safe}]
\label{prop:slp}
The error LTV system \eqref{eq:stacked_LTV} in closed-loop with the controller \eqref{eq:output_feedback} can be written as Eq. \eqref{eq:closed_loop_phi}
\footnote{In the stacked matrices $\Phixw$ and $\Phiuw$, the block at index $j=0$ corresponds to the initial condition, while indices $j \ge 1$ correspond to process disturbances injected at time $j-1$.} if and only if $\mathbf{\Phi}$ satisfies

\begin{equation}\label{eq:slp_convex}
    \begin{aligned}
\begin{bmatrix} \mathbf{I} - \Z\A & -\Z \B \end{bmatrix} \mathbf{\Phi} &= \begin{bmatrix}\mathbf{I} & \mathbf{0}\end{bmatrix},\\
 \mathbf{\Phi} \begin{bmatrix} \mathbf{I} - \Z \A \\ -\Z\C \end{bmatrix} &= \begin{bmatrix} \mathbf{I} \\ \mathbf{0} \end{bmatrix}.        
    \end{aligned}
\end{equation}
The feedback gains $\K$ in \eqref{eq:output_feedback} can be recovered from $\mathbf{\Phi}$ via
\begin{equation}
\K = \Phiue - \Phiuw ({\Phixw})^{-1} \Phixe.
\end{equation}
\end{proposition}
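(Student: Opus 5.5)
The plan is to follow the standard output-feedback SLS argument of \cite{DBLP:journals/arc/AndersonDLM19}, adapted to the one-step measurement delay and to the stacking with $\E,\F$. For necessity, I substitute the controller \eqref{eq:output_feedback} into \eqref{eq:stacked_LTV}. Using $\Delta\mathbf{y} = \mathbf{y}-\Z\C\bar{\mathbf{x}}$, the closed loop reads
\begin{equation*}
\Delta\mathbf{x} = \Z\A\Delta\mathbf{x} + \Z\B\Delta\mathbf{u} + \E\mathbf{w}, \qquad \Delta\mathbf{u} = \K\Z\C\Delta\mathbf{x} + \K\F\mathbf{e}.
\end{equation*}
Since $\Z$ is strictly block-lower-triangular and $\K$ is block-lower-triangular (causal), $\mathbf{I}-\Z\A$ and $\mathbf{I}-\Z\A-\Z\B\K\Z\C$ are unit block-lower-triangular and hence invertible. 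Solving gives
\begin{equation*}
\Phixw = (\mathbf{I}-\Z\A-\Z\B\K\Z\C)^{-1}, \quad \Phixe = \Phixw\Z\B\K, \quad \Phiuw = \K\Z\C\Phixw, \quad \Phiue = \K + \K\Z\C\Phixw\Z\B\K.
\end{equation*}
All four maps are block-lower-triangular. Direct multiplication then verifies both affine identities in \eqref{eq:slp_convex}. For example, the first row of the first identity is $(\mathbf{I}-\Z\A)\Phixw - \Z\B\Phiuw = (\mathbf{I}-\Z\A-\Z\B\K\Z\C)\Phixw = \mathbf{I}$. The other entries follow by the same kind of push-through manipulations.

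For sufficiency, I start from any block-lower-triangular $\mathbf{\Phi}$ satisfying \eqref{eq:slp_convex} and define $\K = \Phiue - \Phiuw(\Phixw)^{-1}\Phixe$.

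First, $\Phixw$ is invertible. The constraint $(\mathbf{I}-\Z\A)\Phixw - \Z\B\Phiuw = \mathbf{I}$ shows that $\Phixw$ has identity diagonal blocks, because $\Z(\cdot)$ is strictly lower triangular. Hence $\K$ is well defined and causal.

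Second, I show that this $\K$ reproduces $\mathbf{\Phi}$. The route I would take is to use the constraints to derive
\begin{equation*}
\Phixe = \Phixw\Z\B\K, \qquad \Phiuw = \K\Z\C\Phixw.
\end{equation*}
Once these hold, the remaining relations, namely $\Phixw = (\mathbf{I}-\Z\A-\Z\B\K\Z\C)^{-1}$ and the expression for $\Phiue$, follow by back-substitution. Uniqueness of the closed-loop responses then identifies the closed-loop maps with $\mathbf{\Phi}$.

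I expect this algebraic identification to be the main obstacle. The source of the difficulty is that both constraints must be combined, mixing left and right multiplications. I would first try the approach of \cite{DBLP:journals/arc/AndersonDLM19}, which uses the two constraint rows $\Phixw(\mathbf{I}-\Z\A) - \Phixe\Z\C = \mathbf{I}$ and $\Phiuw(\mathbf{I}-\Z\A) - \Phiue\Z\C = 0$, together with the column constraints. These should allow writing $\Phiuw = \K\Z\C\Phixw$ through the identity
\begin{equation*}
(\Phiue - \Phiuw(\Phixw)^{-1}\Phixe)\,\Z\C\,\Phixw = \Phiuw.
\end{equation*}
This identity is obtained by right-multiplying the row constraints by $(\mathbf{I}-\Z\A)^{-1}$ and eliminating $(\mathbf{I}-\Z\A)^{-1}$.

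If this direct elimination gets messy, my fallback is the state-space realization of the controller from \cite{DBLP:journals/arc/AndersonDLM19}. Its closed-loop transfer is known to equal $\mathbf{\Phi}$, and its input–output map equals the stated $\K$. The delay structure enters only through $\Z\C$ in place of $\C$, and the adaptation in \cite{zhou2023safe} already handles this. Finally, the scalings $\E$ and $\F$ just right-multiply the disturbance channels, so they do not affect the argument.
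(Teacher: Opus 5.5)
Your proposal is correct. It is the standard output-feedback SLS argument: necessity by computing the closed-loop maps under $\K$, and sufficiency by showing that $\K = \Phiue - \Phiuw(\Phixw)^{-1}\Phixe$ reproduces $\mathbf{\Phi}$. This is exactly what the paper defers to via \cite[Prop.~1]{zhou2023safe}, which it cites in place of a proof.

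The step you flag as the main obstacle takes only two lines and needs no fallback. The second identity in \eqref{eq:slp_convex} gives $\Phiue\Z\C = \Phiuw(\mathbf{I}-\Z\A)$ and $(\Phixw)^{-1}\Phixe\Z\C = (\mathbf{I}-\Z\A) - (\Phixw)^{-1}$, so $\K\Z\C\Phixw = \Phiuw$. Symmetrically, the first identity gives $\Z\B\Phiue = (\mathbf{I}-\Z\A)\Phixe$ and $\Z\B\Phiuw(\Phixw)^{-1} = (\mathbf{I}-\Z\A) - (\Phixw)^{-1}$, so $\Phixw\Z\B\K = \Phixe$.
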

\begin{proof}
See, e.g., \cite[Prop. 1]{zhou2023safe}.
\end{proof}

We show that the associated \ac{LQG} problem \cite{DBLP:journals/arc/AndersonDLM19,wang2016localized}
can be written as the convex program
\begin{subequations}\label{eq:lqg}
	\begin{align}
		\underset{\bm\Phi}{\textrm{min}} &\quad  
    \left\|
    \begin{bmatrix}
        \mathbf{Q}^{1/2} & \mathbf{0}\\
         \mathbf{0} &\mathbf{R}^{1/2} \\
    \end{bmatrix}
    \begin{bmatrix}
        \bm{\Phi}^{\mathrm{xw}} & \bm{\Phi}^{\mathrm{xe}}\\
        \bm{\Phi}^{\mathrm{uw}} & \bm{\Phi}^{\mathrm{ue}}\\
    \end{bmatrix}
    \begin{bmatrix}
        \mathbf{E}\\
        \mathbf{F}
    \end{bmatrix}
    \right\|_\Frob^2\nonumber\\
    & \qquad + \left\|  {P}^{1/2}   \begin{bmatrix}
        \bm{\Phi}^{\mathrm{xw}}_T & \bm{\Phi}^{\mathrm{xe}}_T\\
    \end{bmatrix}    \begin{bmatrix}
        \mathbf{E}\\
        \mathbf{F}
    \end{bmatrix}
    \right\|_\Frob^2\\
  		\textrm{s.t.} &\quad
\begin{bmatrix} \mathbf{I} - \Z\A & -\Z \B \end{bmatrix}     \begin{bmatrix}
        \bm{\Phi}^{\mathrm{xw}} & \bm{\Phi}^{\mathrm{xe}}\\
        \bm{\Phi}^{\mathrm{uw}} & \bm{\Phi}^{\mathrm{ue}}\\
    \end{bmatrix} = \begin{bmatrix}\mathbf{I} & \mathbf{0}\end{bmatrix},\\
&\quad     \begin{bmatrix}
        \bm{\Phi}^{\mathrm{xw}} & \bm{\Phi}^{\mathrm{xe}}\\
        \bm{\Phi}^{\mathrm{uw}} & \bm{\Phi}^{\mathrm{ue}}\\
    \end{bmatrix}\begin{bmatrix} \mathbf{I} - \Z \A \\ -\Z\C \end{bmatrix} = \begin{bmatrix} \mathbf{I} \\ \mathbf{0} \end{bmatrix},
	\end{align}
\end{subequations}
with $\mathbf{Q} = \mathrm{blkdiag}(Q, \ldots, Q)$ and $\mathbf{R} = \mathrm{blkdiag}(R, \ldots, R)$ the state and input cost matrices, respectively, with $Q \succeq 0$ and $R \succ 0$, and $P$ the terminal cost corresponding to the last time step $T$, and last block-row $\bm{\Phi}^{\bullet}_T$ of $\bm{\Phi}^{\bullet}$. As for typical LQG design, this cost function in \eqref{eq:lqg} corresponds to the expected value of the variance of the quadratic performance objective under stochastic process and measurement noise \cite{DBLP:journals/arc/AndersonDLM19}. In the following, we formulate the corresponding robust constraint satisfaction conditions for the output-feedback setting.

\begin{proposition}
\label{prop:linear_robust_constraint_satisfaction}
\looseness-1Consider \eqref{eq:stacked_LTV} in closed loop with \eqref{eq:output_feedback}, and let $\Phixw,\Phixe,\Phiuw,\Phiue$ satisfy \eqref{eq:slp_convex}. Then robust satisfaction of \eqref{eq:constraints} for all $k$ and all admissible disturbances holds if and only if
\begin{equation}
    \label{eq:robust_constraint}
    \begin{aligned}
            \hspace{-4pt}\sum_{j=0}^{k} \| c_i^{\top} \bm{\Phi}^{\w}_{k,j} E_j \|_1\hspace{-2pt} +\hspace{-2pt} \| c_i^{\top} \bm{\Phi}^{\e}_{k,j} F_j \|_1 \hspace{-2pt}+\hspace{-2pt} c_i^{\top} (z_k, v_k) + b_i \le 0.
    \end{aligned}
\end{equation}
for $i = 1, \ldots, n_\textrm{c}$ with $\bm{\Phi}^\w_{k,j} = (\bm{\Phi}^{\x\w}_{k,j}, \bm{\Phi}^{\u\w}_{k,j})$ and $\bm{\Phi}^\e_{k,j} = (\bm{\Phi}^{\x\e}_{k,j}, \bm{\Phi}^{\u\e}_{k,j})$.
\end{proposition}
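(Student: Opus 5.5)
By Prop.~\ref{prop:slp}, the closed loop of \eqref{eq:stacked_LTV} with \eqref{eq:output_feedback} has exactly the linear disturbance-to-deviation form \eqref{eq:closed_loop_phi}. We write the realized state and input at time $k$ as the nominal pair $(z_k,v_k)=(\bar x_k,\bar u_k)$ plus a linear function of the stacked disturbances. Reading off the $k$-th block rows of \eqref{eq:closed_loop_phi} gives
\begin{equation*}
(x_k,u_k) = (z_k,v_k) + \sum_{j=0}^{k} \bm{\Phi}^{\w}_{k,j} E_j \mathbf{w}_j + \bm{\Phi}^{\e}_{k,j} F_j \mathbf{e}_j ,
\end{equation*}
where $\mathbf{w}_j$ and $\mathbf{e}_j$ are the $j$-th blocks of $\mathbf{w}$ and $\mathbf{e}$, and $E_j$ denotes the $j$-th diagonal block of $\E$ (so $E_0=\Xi$, in line with the footnote's indexing). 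Causality, i.e.\ the block-lower-triangular structure of $\bm\Phi$, truncates the sum at $j\le k$. The constraint rows for the state and input parts are stacked so that $c_i$ acts on the pair $(\bm{\Phi}^{\x\bullet}_{k,j},\bm{\Phi}^{\u\bullet}_{k,j})$.

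Next, fix $i$ and $k$. Robust satisfaction of $c_i^\top(x_k,u_k)+b_i\le 0$ for all admissible disturbances is equivalent to
\begin{equation*}
\max_{\mathbf{w}_j\in\mathcal{B}^{\nx},\,\mathbf{e}_j\in\mathcal{B}^{\ny}} c_i^\top(x_k,u_k)+b_i\le 0 .
\end{equation*}
The objective is affine in the disturbances, and the feasible set is a product of independent $\ell_\infty$ unit balls, one per block and time. The maximum therefore separates into a sum of block-wise maxima. Each term is the support function of $\mathcal{B}^n$, i.e.\ the dual norm: $\max_{\|d\|_\infty\le 1} a^\top d=\|a\|_1$. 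Applying this with $a^\top=c_i^\top\bm{\Phi}^{\w}_{k,j}E_j$ and $a^\top=c_i^\top\bm{\Phi}^{\e}_{k,j}F_j$ yields exactly the left-hand side of \eqref{eq:robust_constraint}. This gives both directions at once. For ``if'', the constraint holds with the worst case bounded as computed. For ``only if'', the maximum is attained at the sign vectors $d=\mathrm{sign}(a)$, and these are admissible disturbances, so violating \eqref{eq:robust_constraint} produces a disturbance realization that violates the constraint.

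The main obstacle is bookkeeping rather than analysis. First, the index $j=0$ corresponds to $\tilde w$ with scaling $\Xi$, so the $\mathbf{w}$ sum has $k+1$ blocks $j=0,\dots,k$. Second, with the one-step measurement delay, the $\mathbf{e}$-blocks entering up to time $k$ must also be matched to the sum range $j\le k$. I would check that any blocks $\bm{\Phi}^{\e}_{k,j}$ that are structurally zero under \eqref{eq:slp_convex} harmlessly contribute zero. Third, the ``only if'' direction requires that the disturbance blocks be chosen independently across time, which holds because the admissible set is a Cartesian product of unit balls. This independence across time is what makes the worst case exactly achievable rather than merely an upper bound.
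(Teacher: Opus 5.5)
Your proposal is correct and follows essentially the same route as the paper's proof. Both write $(x_k,u_k)$ as the nominal point plus an affine image, via the closed-loop map \eqref{eq:closed_loop_phi}, of a product of $\ell_\infty$ unit balls, and both evaluate the worst case of $c_i^\top(x_k,u_k)$ through the support-function identity $\max_{\|d\|_\infty\le 1}a^\top d=\|a\|_1$. Your version is more complete than the paper's terse argument: you explicitly justify splitting the maximum block by block over the Cartesian product of disturbance sets, and you prove the ``only if'' direction using admissible sign-vector disturbances, both of which the paper leaves implicit.
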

\begin{proof}
    The proof follows directly from the state-feedback result in \cite{sieber2021system}; we provide the full proof in Appendix~\ref{appA_proof_SLC}.
\end{proof}

\section{Method Overview}\label{sec:method_overview}

In this section, we summarize our method for approximately solving the nonlinear output-feedback problem \eqref{eq:nonlinear_of} (also see Fig.~\ref{fig:overview}), which we call VISION-SLS. To make solving \eqref{eq:nonlinear_of} tractable, we introduce two approximations. First, instead of planning directly from high-dimensional images, we map each observation $y \in \mathbb{R}^{\ny}$ to a low-dimensional reduced measurement $y^r \in \mathbb{R}^{n_r}$. Concretely, we compose a pre-trained DINO feature extractor \cite{DBLP:journals/corr/abs-2304-07193} with a learned projection to obtain
\[
\proj^\text{dino}(\cdot): \mathbb{R}^{\ny} \rightarrow \mathbb{R}^{n_r}
\]
together with a corresponding state-based reduced observation model
\[
h^r(\cdot): \mathbb{R}^{\nx} \rightarrow \mathbb{R}^{n_r},
\]
so that the reduced observation can be written as
\begin{equation}
    y_{k+1}^r = \proj^\text{dino}(y_k)= h^r(x_k) + F^r(x_k)e_k.
    \label{eq:reduced_obs_map}
\end{equation}
Intuitively, \(h^r(x_k)\) captures the control-relevant information in the image, while \(F^r(x_k)e_k\) is an inflated, state-dependent uncertainty term that overbounds both sensor noise and the error introduced by compressing the image into a low-dimensional observation. This lets us plan efficiently in a reduced feature space, rather than directly in pixel space (see Sec.~\ref{sec:method_error_bound}).
\begin{remark}
\label{rem:lidar}
Although tailored for camera sensors, this work could be implemented with other sensors, such as LiDAR (see Sec. \ref{sec:results_baselines}), depth sensors, or joint encoders.
\end{remark}

Second, since optimizing over general causal policies is intractable, we restrict \eqref{eq:nonlinear_of_policy} to causal \emph{affine}, time-varying feedback policies around a jointly-optimized nominal state/control/measurement trajectory $(\bm{z}, \bm{v})$ \footnote{We use the notation $(\mathbf{\bar x}, \mathbf{\bar u})$ to denote the nominal trajectory of the LTV dynamics, in contrast to $(\bm{z}, \bm{v})$ which denote the nominal trajectory of the nonlinear dynamics.}, i.e.,
\begin{equation}\label{eq:affine} u_k = \bar{u}_k +K^0_k (x_0 - \bar x_0 ) + \sum_{j = 1}^{k-1} K_{k, j} (y_{j+1}^r - h^r(\bar x_j)), \end{equation}
with optimized gains $K_{k,j}$ (Sec.~\ref{sec:method_sls}). 
The additional term $K^0_k (x_0 - \bar x_0)$ explicitly accounts for uncertainty in the initial condition $\bar x_0$, which cannot in general be inferred from the measurement history and therefore enters the closed-loop system as an independent disturbance channel.
This form admits fast evaluation at runtime via matrix–vector products.

\begin{figure}[ht!]
    \centering
    \includegraphics[width=\linewidth]{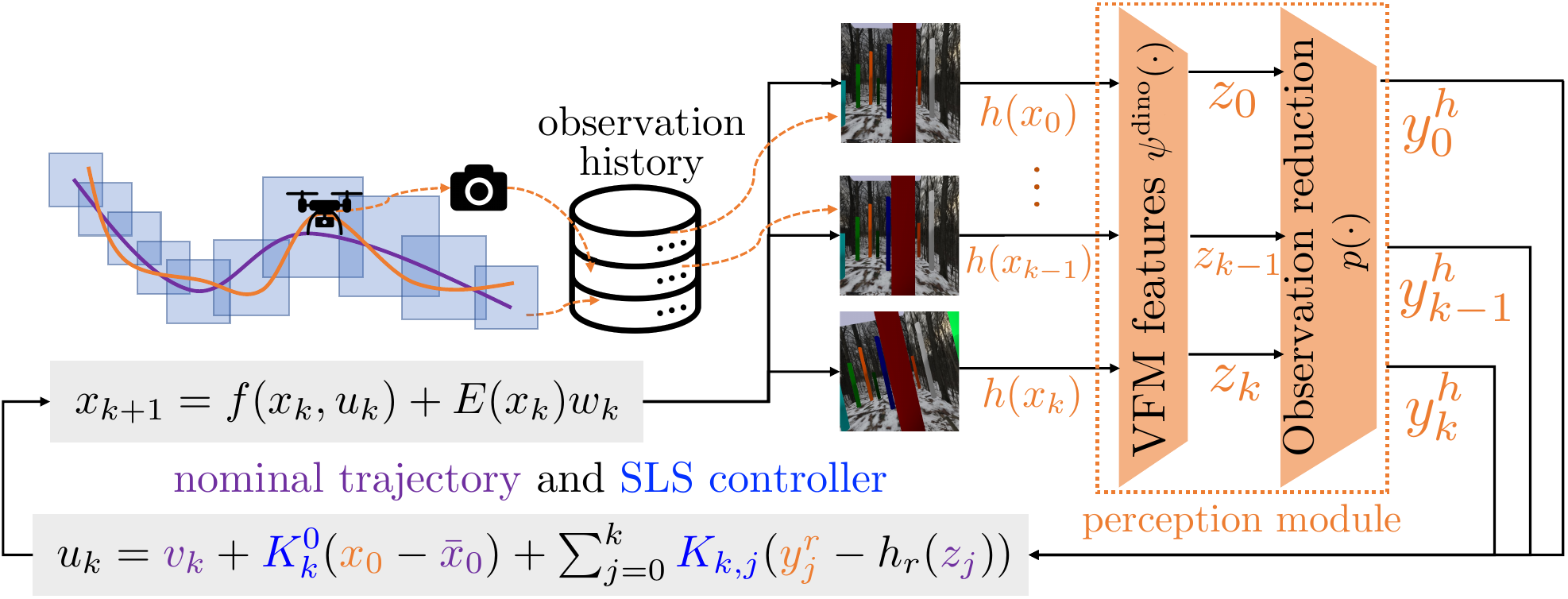}\vspace{-5pt}
    \caption{\textbf{Online SLS execution:} Images are mapped to low-dimensional observations 
$p(\psi^\textrm{dino}(y))$, approximated by a learned linear map $h^r(x) = Cx$ with bounded error. 
These observations feed into an \ac{SLS}-based dynamic output-feedback controller,
which combines the nonlinear nominal trajectory $(\bm{z}, \bm{v})$ and feedback gains $K$ to generate safe inputs.}
    \label{fig:overview}
\end{figure}

We detail our method in Sec. \ref{sec:method} and \ref{sec:method_scp}. In Sec.~\ref{sec:method_sls}, we develop a nonlinear \ac{SLS} approach to the output-feedback policy in \eqref{eq:nonlinear_of_policy} by combining nonlinear trajectory optimization with an \ac{SLS} parameterization of the local closed-loop error dynamics, with robust constraint satisfaction (Sec. \ref{sec:robust_constraint_satisfaction}).
In Sec.~\ref{sec:method_dino}, we use pre-trained visual features to learn a low-dimensional observation map $h^r$, together with error bounds in Sec.~\ref{sec:method_error_bound}, so that pixel observations can be used while retaining robust constraint satisfaction guarantees in Sec. ~\ref{sec:robust_constraint_satisfaction}. In Sec.~\ref{sec:method_scp}, we propose an efficient iterative convexification algorithm to solve the resulting nonconvex program efficiently.

\section{Nonlinear Control Design}\label{sec:method}

In this section, we build upon the linear LTV \ac{SLS} framework from Propositions~\ref{prop:slp}--\ref{prop:linear_robust_constraint_satisfaction} to synthesize a dynamic output-feedback controller for the nonlinear system~\eqref{eq:system} (Sec. \ref{sec:method_sls}) with robust constraint satisfaction guarantees (Sec. \ref{sec:robust_constraint_satisfaction}). 
Specifically, we jointly optimize the nominal trajectory $(\bm z,\bm v)$, the closed-loop responses $\mathbf{\Phi}$ of the induced LTV system, and modeling-error overbounds for robust constraint satisfaction of the nonlinear system. Finally, we tractably implement the proposed output-feedback SLS framework for visual measurements by learning a reduced observation model (Sec. \ref{sec:method_dino}).

\subsection{Closed-loop error dynamics}
\label{sec:method_sls}

Our robust prediction leverages a nominal system
\begin{equation}
    z_{k+1} = f(z_k, v_k),\quad k =0, \ldots, T-1
    \label{eq:nom_nonlinear}
\end{equation}
where $z_k\in\R^{\nx}$ is the nominal state, and $v_k\in\R^{\nuu}$ is the nominal input. 
We approximate the tracking error dynamics  
\begin{equation}
    \Delta x_{k+1} \doteq f(x_k, u_k) + E(x_k)w_k - f( z_k, v_k)
\end{equation}  
around the nominal trajectory $(\mathbf{ z}, \mathbf{v})$ using a first-order Taylor expansion.  
This yields the LTV error system  
\begin{equation}
    \label{eq:linearized_dynamics}
    \Delta x_{k+1} = A(z_k,v_k) \Delta x_k + B_k(z_k,v_k) \Delta u_k  
    + \sigma(\tau_k, z_k, v_k),
\end{equation}  
\begin{align}
    \hspace{-5pt}\textrm{with } A_k := \frac{\partial f(x,u)}{\partial x}\Big|_{(z_k, v_k)},\quad
    B_k := \frac{\partial f(x,u)}{\partial u}\Big|_{(z_k, v_k)},
\end{align}  
and the term $\sigma(\tau_k, z_k, v_k)$ captures the modelling error with $\tau\in \R$ satisfying
\begin{equation}
    \left\|
    \begin{bmatrix}
        \Delta x \\
        \Delta u
    \end{bmatrix}
    \right\|_\infty \le \tau,\quad \Delta x = x-z,\quad \Delta u = u-v.
    \label{eq:tau_definition}
\end{equation}

\begin{assumption}\label{ass:lin_error}
For each $k\in\mathbb{N}_{0:T-1}$ and any deviations $(\Delta x,\Delta u)$ satisfying
$\|(\Delta x,\Delta u)\|_\infty \le \tau_k$, the dynamics remainder is bounded as
\begin{equation}\label{eq:dyn_remainder_bound}
\begin{aligned}
|
f(z_k+\Delta x,\, v_k&+\Delta u)
- f(z_k,v_k) \\
&- A_k\Delta x - B_k\Delta u
|
\le \sigma(\tau_k,z_k,v_k).
\end{aligned}
\end{equation}
\end{assumption}

\begin{remark}
    The function $\sigma(\tau_k, z_k, v_k)$ can be constructed to capture the linearization error of the dynamics $f$ around the nominal trajectory, as detailed in \cite{Leeman_2025_TAC,leeman2025guaranteed} or directly learned from data~\cite{anutam2026}.
\end{remark}
Similarly, the reduced measurement model \eqref{eq:reduced_obs_map}
with measurement noise $e_k$ can be expanded around the nominal trajectory.
In the following, we use a linear reduced observation model
\begin{equation}
h^r(x)=C^r x,
\qquad C^r \in \mathbb{R}^{n_r \times \nx},    
\end{equation}
which yields the reduced measurement error model
\begin{equation}
\label{eq:measurement_error}
    \Delta y_{k+1}^r = C^r \Delta x_k + F^r(z_k)e_k + \eta(\tau_k).
\end{equation}
where \(\eta(\tau_k)\) captures the residual approximation error due to nonlinearities in $F^r(\cdot)$.

\begin{remark}
    \label{assumption:linear_obs}
        The reduced observation model \(h^r\) in \eqref{eq:reduced_obs_map} may be any differentiable nonlinear function. A key strength of the linear reduced observation model \eqref{eq:measurement_error} is that the learned perception map \(\proj(\cdot)\) absorbs the main nonlinearities of visual perception offline, so that online planning can rely on a simple linear reduced model, with the remaining mismatch captured by the calibrated state-dependent error bound in Sec.~\ref{sec:method_error_bound}.
\end{remark}

To certify the nonlinear output-feedback loop, we require that the residual reduced-observation mismatch admits a state- and tube-dependent overbound. This is stated in Assumption \ref{assum:bounded_perception_error} below and is
implemented in Sec.~\ref{sec:method_error_bound} via an empirical calibration process.

\begin{assumption}
\label{assum:bounded_perception_error}
For each $k \in \mathbb{N}_{0:T-1}$ and any deviations
$(\Delta x,\Delta u)$ satisfying $\|(\Delta x,\Delta u)\|_\infty \le \tau_k$,
the reduced observation mismatch in \eqref{eq:measurement_error} is bounded by the calibrated
state-dependent overbound $\Upsilon(\tau_k,z_k)$, i.e.,
\begin{equation}
    \big|F^r(z_k)e_k + \eta(\tau_k)\big| \le \Upsilon(z_k,\tau_k)e_k,
\end{equation}
where the absolute value is understood component-wise.
\end{assumption}

In Section~\ref{sec:method_error_bound}, we work with a reduced image $y^r$ and a corresponding reduced observation model \eqref{eq:measurement_error}. The function $F$ is defined to explicitly capture the approximation error introduced by this reduction, as described in \eqref{eq:reduced_obs_map}.

We then define the closed-loop response variables $\mathbf{\Phi}$ and enforce the SLS constraints \eqref{eq:slp_convex}. In this formulation, the system matrices $\A(\bm{z},\bm{v})$ and $\B(\bm{z},\bm{v})$ depend on the nominal decision variables $\bm{z}$ and $\bm{v}$. 
Specifically, the deviation variables satisfy
 \begin{equation}
     \begin{bmatrix}
         \mathbf{\Delta x}\\
         \mathbf{\Delta u}
     \end{bmatrix} = \mathbf{\Phi}
    \begin{bmatrix}
       {\mathbf{\Sigma}} & \mathbf{0}  \\ \mathbf{0}  & \mathbf{\Upsilon}
    \end{bmatrix} \begin{bmatrix}\mathbf{ w} \\ \mathbf{e} \end{bmatrix} 
 \end{equation}
which corresponds to the closed-loop response of the linearized dynamics \eqref{eq:linearized_dynamics}. Here,
${\mathbf{\Sigma}} = \text{blkdiag}({\Xi}, \Sigma_0, \ldots, \Sigma_{T-1})$ captures state-dependent disturbance scaling induced by the function $\sigma(\tau_k, z_k, v_k)$, with
\begin{equation}
    \begin{aligned}
        \Sigma_j &= \sigma(\tau_j, z_j, v_j) I_{\nx}, \quad j \in \mathbb{N}_{0:T-1},\\
    \end{aligned}
\end{equation}
including the initial condition uncertainty $\Xi$.
The vector $\bm{\tau} = (\rho, \tau_0, \ldots, \tau_{T-1})$ collects the tube radii, and $\mathbf{\Upsilon}$ is defined analogously to capture the observation uncertainty.

\subsection{Nonlinear robust constraint satisfaction}
\label{sec:robust_constraint_satisfaction}
Based on the LTV controller parameterization and the error overbounds, we can now enforce robust constraint satisfaction of the nonlinear system \eqref{eq:system}.

\begin{proposition}
\label{prop:robust_constraint_nonlinear}
Consider the output–feedback \ac{SLS} responses
\(
\Phixw,\Phixe,\Phiuw,\Phiue
\)
satisfying \eqref{eq:slp_convex} for the LTV error model and suppose that
Assumptions~\ref{ass:calibrated_uncertainty},~\ref{ass:lin_error} and \ref{assum:bounded_perception_error} hold. Then the closed-loop trajectories of the nonlinear system \eqref{eq:system} satisfy the state and input constraints robustly if the following tightened inequalities hold:
\begin{subequations}
    \begin{align}
    \label{eq:nonlin_constraint_tight}
    &\textstyle\sum_{j=0}^{k}
    \big\| c_i^{\top} \bm\Phi^\mathrm{w}_{k,j} \Sigma_j \big\|_1
+
\big\| c_i^{\top} \bm\Phi^\mathrm{e}_{k,j} \Upsilon_j \big\|_1
+
c_i^{\top} (z_k, v_k) + b_i \le 0.\\
    & \textstyle\sum_{j=0}^{k} \big\| \hat e_l^\top \bm\Phi^\mathrm{w}_{k,j} \Sigma_j \big\|_1 +\big\| \hat e_l^\top \bm\Phi^\mathrm{e}_{k,j} \Upsilon_j \big\|_1 \le \tau_k,
    \label{eq:nonlin_constraint_tight_tau}
    \end{align}
\end{subequations}
for all $\kinTm$, $i = 1, \ldots, n_\mathrm{c}$, with $\Sigma_j = \sigma(\tau_j, z_j, v_j) I_{\nx}$,
for $l=1,\ldots,\nx+  \nuu $ where \(\hat e_l\) denote standard basis vectors.

\end{proposition}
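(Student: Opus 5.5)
The plan is an induction over time that turns the nonlinear error dynamics into a bounded LTV error system with fictitious disturbances. The tube condition \eqref{eq:nonlin_constraint_tight_tau} keeps the remainders within the bounds of Assumptions~\ref{ass:lin_error} and \ref{assum:bounded_perception_error}. Proposition~\ref{prop:linear_robust_constraint_satisfaction} is then applied with the scalings $\Sigma_j,\Upsilon_j$ in place of $E_j,F_j$.

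\textbf{Step 1: fictitious disturbances.} Fix any realization of $x_0\in\mathcal{X}_0$, $w_k\in\mathcal{B}^{\nx}$ and $e_k\in\mathcal{B}^{\ny}$. Let the nonlinear system run in closed loop with the affine policy \eqref{eq:affine}, i.e., with $\K$ recovered from $\bm\Phi$ via Proposition~\ref{prop:slp}. Write the exact error dynamics as
\begin{equation*}
\Delta x_{k+1}=A_k\Delta x_k+B_k\Delta u_k+d_k,
\end{equation*}
where $d_k$ collects the Taylor remainder of $f$ together with $E(x_k)w_k$. Similarly, write $\Delta y^r_{k+1}=C^r\Delta x_k+r_k$, with $r_k=F^r(z_k)e_k+\eta(\tau_k)$. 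The key claim is that if $\|(\Delta x_k,\Delta u_k)\|_\infty\le\tau_k$, then $d_k=\Sigma_k\tilde w_k$ and $r_k=\Upsilon_k\tilde e_k$ for some normalized $\tilde w_k\in\mathcal{B}^{\nx}$ and $\tilde e_k$ in the unit ball. Taking $\tilde w_k:=d_k/\sigma(\tau_k,z_k,v_k)$ componentwise works because of Assumption~\ref{ass:lin_error}, with the disturbance term assumed absorbed into $\sigma$ as in \cite{Leeman_2025_TAC}. The measurement side follows in the same way from Assumption~\ref{assum:bounded_perception_error}, reading the overbound as a diagonal scaling. The initial condition is handled directly by $x_0-\bar x_0=\Xi\tilde w$.

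\textbf{Step 2: induction on $k$.} The hypothesis at time $k$ is that, for all $j<k$, the deviations satisfy $\|(\Delta x_j,\Delta u_j)\|_\infty\le\tau_j$, so normalized $\tilde w_j,\tilde e_j$ exist for every $j<k$. Causality is what makes the induction close. Since the maps in $\bm\Phi$ are block-lower-triangular, Proposition~\ref{prop:slp} gives $(\Delta x_k,\Delta u_k)=\sum_{j\le k}\bm\Phi^\w_{k,j}\Sigma_j\tilde w_j+\bm\Phi^\e_{k,j}\Upsilon_j\tilde e_j$. This expression involves only disturbances already certified, up to the index convention of the footnote in Proposition~\ref{prop:slp}. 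Applying Hölder's inequality row by row with basis vectors $\hat e_l$ and using \eqref{eq:nonlin_constraint_tight_tau} yields $\|(\Delta x_k,\Delta u_k)\|_\infty\le\tau_k$, which closes the induction. The base case is the initial-condition term with $\rho$.

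\textbf{Step 3: constraints.} The same Hölder bound applied to $c_i^\top(x_k,u_k)=c_i^\top(z_k,v_k)+c_i^\top(\Delta x_k,\Delta u_k)$ gives \eqref{eq:nonlin_constraint_tight}, hence $(x_k,u_k)\in\Safe$. This is exactly the sufficiency direction of Proposition~\ref{prop:linear_robust_constraint_satisfaction}.

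\textbf{Main obstacle.} The delicate part is the circularity: $\sigma$ and $\Upsilon$ are only valid inside the tube, while the tube itself is defined by the response to those disturbances. This is resolved only by carefully tracking the time indices. The one-step measurement delay ensures that $\tilde e_j$ enters $(\Delta x_k,\Delta u_k)$ only for $j$ strictly earlier than the index at which it is certified. I would check this index alignment first. I would also check that using the $k$-dependent $\eta(\tau_k)$ is consistent with the component-wise diagonal interpretation of $\Upsilon$.
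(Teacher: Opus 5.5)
Your proposal is correct and follows the paper's own argument. You treat the realized linearization and perception residuals as $\Sigma_j$- and $\Upsilon_j$-scaled unit-ball disturbances, express the deviations through the causal SLS responses, and bound both the constraint rows $c_i$ and the coordinate rows $\hat e_l$ with the $\ell_\infty$/$\ell_1$ (H\"older) support-function identity.

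Your explicit induction over $k$ is a useful addition, using block-lower-triangularity of $\bm\Phi$ and the delay in the affine policy \eqref{eq:affine}. It makes rigorous a step the paper's proof leaves circular: the paper invokes Assumptions~\ref{ass:lin_error} and~\ref{assum:bounded_perception_error} inside the tube, then uses \eqref{eq:nonlin_constraint_tight_tau} to recover that same tube. Your absorption of $E(x_k)w_k$ into $\sigma$ is the same implicit convention the paper relies on when setting $\Sigma_j=\sigma(\tau_j,z_j,v_j)I_{\nx}$.
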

\begin{proof}
The proof follows directly from \cite{sieber2021system,Leeman_2025_TAC}. See Appendix \ref{appendix:proof_robust_constraint_nonlinear} for details.
\end{proof}

Collecting the nominal dynamics \eqref{eq:nom_nonlinear}, the controller parameterization \eqref{eq:slp_convex} based on the linearized dynamics \eqref{eq:linearized_dynamics} and observation model, and the robust constraint satisfaction in \eqref{eq:nonlin_constraint_tight}--\eqref{eq:nonlin_constraint_tight_tau}, we pose the following finite-horizon nonlinear output-feedback \ac{SLS} synthesis problem in Eq. \eqref{eq:nonlinear_of_sls}.
\begin{subequations}
\label{eq:nonlinear_of_sls}
\begin{align}
\underset{\substack{\mathbf{\Phi},\bm \tau  \\\bm z,\bm v}}{\text{min}}
&\; J(\bm{z}, \bm{v}, \bm{\tau}, \mathbf{\Phi}) \label{eq:nonlinear_of_sls_cost}\\
\textrm{s.t.}\quad & z_{k+1} = f(z_k, v_k), \quad \kinTm, 
\label{eq:nonlinear_of_sls_dyn}\\
& \bigl[\Iblk - \Z \A(\bm{z},\bm{v}) \;\; -\Z \B(\bm{z},\bm{v})\bigr]\;\mathbf{\Phi}
= [\Iblk\;\;\mathbf{0} ], 
\label{eq:nonlinear_of_sls_correctness1}\\
& \mathbf{\Phi}\,\begin{bmatrix} \Iblk - \Z \A(\bm{z},\bm{v}) \\[4pt] -\Z \C(\bm{z},\bm{v}) \end{bmatrix}
= \begin{bmatrix}\Iblk \\\mathbf{0} \end{bmatrix}, 
\label{eq:nonlinear_of_sls_correctness2}\\
& \sum_{j=0}^{k}\Bigl\|c_i^\top \bm\Phi^\w_{k,j}\,\Sigma_j\Bigr\|_1
+ \Bigl\|c_i^\top \bm\Phi^\e_{k,j}\,\Upsilon_j\Bigr\|_1
+ c_i^\top\begin{bmatrix} z_k \\ v_k \end{bmatrix} + b_i \le 0, \notag
\\
& \quad i=1,\ldots,n_\mathrm{c},\;\;\kinTm, \label{eq:nonlinear_of_sls_tube} \\
& \sum_{j=0}^{k}\Bigl\|\hat e_l^\top \bm\Phi^\w_{k,j}\,\Sigma_j\Bigr\|_1
+ \Bigl\|\hat e_l^\top \bm\Phi^\e_{k,j}\,\Upsilon_j\Bigr\|_1 \le \tau_k, 
\label{eq:nonlinear_of_sls_tau}\\
& \quad l=1,\ldots,\nx+\nuu,\;\;\kinTm. \notag
\end{align}
\end{subequations}
The formulation~\eqref{eq:nonlinear_of_sls} integrates all components of the robust prediction and control design:
\begin{itemize}
    \item The nominal trajectory is propagated according to the nonlinear dynamics model~\eqref{eq:nonlinear_of_sls_dyn}.
\item 
The propagation of model and measurement disturbance is captured via \eqref{eq:nonlinear_of_sls_correctness1}–\eqref{eq:nonlinear_of_sls_correctness2}, ensuring that the response matrices $\mathbf{\Phi}$ correspond to the closed-loop response of the LTV system linearized around the nominal trajectory.
\item Robust constraint satisfaction is enforced by the tightened constraints~\eqref{eq:nonlinear_of_sls_tube}, which guarantee that constraints \eqref{eq:constraints} hold for all disturbances and modeling errors.
\end{itemize}
As per Prop.~\ref{prop:robust_constraint_nonlinear}, the disturbance-feedback gains resulting from solving \eqref{eq:nonlinear_of_sls} ensures that the uncertain nonlinear system \eqref{eq:nonlinear_of_sls_dyn} remains within the constraints $\mathcal{S}$ over the prediction horizon.

\subsection{Perception function}
\label{sec:method_dino}
We now describe how to build the reduced observation map in \eqref{eq:reduced_obs_map} from camera output using pre-trained visual features. The goal is to compress high-dimensional images into a low-dimensional, control-relevant observation, while fitting a calibrated overbound on the error introduced by this reduction, so that robust constraint satisfaction can still be enforced via Prop.~\ref{prop:robust_constraint_nonlinear}.
\subsubsection{Learning a lower-dimensional visual representation}\label{sec:method_dino}
We seek a reduced representation that retains only the information in the image needed for control.  
Since directly planning from high-dimensional visual features is intractable, we instead project them into a low-dimensional observation space that is as informative as possible about the underlying state.
To quantify this, we use a dynamics-aware observability metric. Based on the dynamics \(f\) and the reduced observation model \(h^r\), we define the local discrete-time nonlinear observability matrix at \(\hat x\) over an input sequence \(\bm u := \{u_0,\ldots,u_n\}\) as
\begin{equation}\label{eq:observability}
    \mathcal{O}(\hat x, \bm u) \doteq \begin{bmatrix}
        h^r(f(\hat x, u_0)) \\
        h^r(f(f(\hat x, u_0), u_1)) \\
        \vdots \\
        h^r(f(f(\ldots), u_{n-1}), u_n))
    \end{bmatrix},
\end{equation}
The observability matrix $\mathcal{O}$ quantifies how well $x$ can be inferred from outputs: higher (Jacobian) rank implies better observability\cite[Ch.~3]{Grossman99}, implemented here via the minimum singular value $\sigma_\text{min}$.

\begin{remark}
If $\nabla_x \mathcal{O}\vert_{x=\hat x}$ is full rank, the system is strongly observable, i.e., $x$ is locally recoverable from $n\!+\!1$ outputs.
\end{remark}

\begin{remark}
Dimensionality reduction via reconstruction (e.g., autoencoders~\cite{DBLP:journals/jmlr/Baldi12}) often retains visually salient but control-irrelevant distractors~\cite{DBLP:conf/iclr/0001MCGL21} making overbounds loose.
\end{remark}

Given DINO features $\psi^\textrm{dino}(y_i)$, we learn two objects: a projection $\proj(\cdot)$ from the feature space to a low-dimensional observation space $\Y_r$, and a state-based reduced observation model $h^r(\cdot)$. The projection is trained so that the reduced observation $\proj(\psi^\textrm{dino}(y_i))$ matches the state-based target $h^r(x_i)$. To make this reduced observation useful for control, we also encourage it to preserve observability of the underlying state through the metric above. Specifically, over the dataset $\D$, we train $\proj(\cdot)$ and $h^r$ by minimizing
\begin{equation}\label{eq:loss_observability}
    \ell(\theta_p, \theta_h) = \Vert \proj(\psi^\textrm{dino}(y_i); \theta_p) - h^r(x_i; \theta_h) \Vert - \lambda \sigma_\text{min}(\mathcal{O}(\hat x, \bm u))
\end{equation}
where \(\theta_p\) are the parameters of the projection network and \(\theta_h\) parameterizes the reduced observation model \(h^r\) (in our implementation, the entries of \(C^r\)). The scalar \(\lambda > 0\) balances feature matching against observability. The dimension of the reduced observation \(h^r\) and the horizon length \(n\) used in \(\mathcal{O}\) are treated as hyperparameters, and \(\hat x\) and \(\bm u\) are randomly sampled during training. After training, we fix \(\proj(\cdot)\) and \(h^r(\cdot)\), and then calibrate a state-dependent overbound on the residual reduction error in Sec.~\ref{sec:method_error_bound}.

\subsubsection{Calibration of the perception reduction error}
\label{sec:method_error_bound}
Finally, once the perception module is fixed, we bound the residual error introduced by replacing the original high-dimensional observation with the reduced observation \(y^r=\proj(\psi^\textrm{dino}(y))\). 

This residual captures the mismatch between the learned reduced observation and its state-based model \(h^r(x)\). To empirically preserve robust constraint satisfaction in \eqref{eq:reduced_obs_map}, we overbound this mismatch by a state-dependent uncertainty set. Specifically, from a dataset $\{(x_i,y_i)\}$ we form residuals
\begin{equation}
    r_i \;=\; \big\| \proj(\psi^\textrm{dino}(y_i)) - h^r(x_i) \big\|_\infty .
\end{equation}
To define a state-dependent uncertainty set of the form \(F^r(x)\mathcal B\) that bounds this mismatch, we empirically calibrate a scalar envelope \(b(x)\) by fitting a polynomial via the convex program
\begin{equation}
  \begin{aligned}
  \min_{\beta,\,\xi_i\ge 0} \quad 
  & \sum_{i} \big( \beta^\top m(x_i) \big)
  + \gamma \|\beta\|_2^2 + \mu \sum_i \xi_i \\
  \text{s.t.}\quad
  & \beta^\top m(x_i) \;\ge\; r_i - \xi_i \quad \forall i ,
  \end{aligned}
\label{eq:optimal_polynomial}
\end{equation}
where $\gamma,\mu\!\ge\!0$ trade off smoothness and outlier violations, \(m(\cdot)\) is a monomial basis, and \(b(x)\!:=\!\beta^\top m(x)\) is the calibrated scalar envelope.
We then set $F^r(x) \;=\; b(x)\, I$ so that
\begin{equation}
    \proj(\psi^\textrm{dino}(y)) - h^r(x) \;\in\; F^r(x)\,\mathcal B_\infty.
\end{equation}
This procedure yields a differentiable, state-dependent uncertainty set that is cheap to evaluate inside constrained trajectory optimizers.

\begin{remark}
The calibrated scalar overbound is intended to certify perception error over the operating region covered by the calibration data, and is therefore most reliable near that distribution. Adapting the certificate to a new environment only requires recalibrating the scalar bound $b(x)$, rather than retraining the full perception-control pipeline. In our experiments (Section~\ref{sec:results}), this recalibration is data-efficient: even a small number of calibration samples already provides high empirical coverage. The learned perception map plays a different role: because it is built on DINO features that capture transferable geometric structure, it may generalize beyond the calibration environment, see, e.g, \cite{zhou2024dino}, while the bound is used only for safety certification.
Alternatively, Lipschitz-type bounds~\cite{knuth2021planning, chou2021model} and bounds given by neural network verifiers ~\cite{zhang2018efficient} can provide rigorous overbounding guarantees, but typically lead to nonsmooth or overly conservative constraint tightenings. Alternatively, distribution-free approaches such as conformal prediction~\cite{anutam2026} or extreme value theory~\cite{knuth2023statistical} can provide \textit{statistical} guarantees beyond the sampled data.     
\end{remark}

\section{Implementation via SCP}
\label{sec:method_scp}

In this section, we will discuss how we efficiently solve \eqref{eq:nonlinear_of_sls} for high-dimensional robotics problems. We first discuss an efficient Riccati-based solver for the LTV SLS output feedback problem \eqref{eq:lqg} (Sec. \ref{sec:riccati_solver}), which will be used as a subroutine in a sequential convex programming solver (Sec. \ref{sec:method_scp_alg}), which we call VISION-SLS.

\subsection{Efficient Solver for SLS-based Output-Feedback LQG}\label{sec:riccati_solver}
Although the LTV SLS output-feedback problem \eqref{eq:lqg} is convex and therefore amenable to off-the-shelf general-purpose convex optimization solvers \cite{gurobi}, the large number of decision variables and constraints causes these solvers to scale poorly with the horizon and system dimension in SLS problems~\cite{LEEMAN2024_fastSLS}. This can become prohibitive for large-scale robotics problems. 
To overcome this challenge, we demonstrate that the unconstrained linear time-varying problem~\eqref{eq:lqg} admits an efficient double-Riccati structure, akin to state-feedback~\cite{LEEMAN2024_fastSLS}. In Section~\ref{sec:method_scp}, this
structure is used as a computational building block inside the nonlinear SCP algorithm.
\begin{proposition}
    \label{prop:riccati}
Problem~\eqref{eq:lqg} has an optimal solution given by $T$ parallel backward Riccati recursions
\begin{equation}
    \begin{aligned}
    \label{eq:riccati}
    S_{T,j} & = P, \\
    K_{k,j} & =-\left(R+B_k^{\top} S_{k+1,j}B_k\right)^{-1}\left(B_k^{\top} S_{k+1,j}A_k\right), \\
    S_{k,j} & = Q+A_k^{\top} S_{k+1,j}A_k+\left(A_k^{\top} S_{k+1,j}B_k\right) K_{k,j},
    \end{aligned}
\end{equation}
and independently $T$ parallel backward Kalman recursions
\begin{equation}
    \begin{aligned}
    \label{eq:kalman_riccati_indices}
    \Pi_{k,0} & = \Xi \Xi^\top, \\
    L_{k,j+1}  &= - \Big(F_{j}F_{j}^\top + C_{j} \Pi_{k,j} C_{j}^\top \Big)^{-1} C_{j} \Pi_{k,j} A_{j}^\top, \\
    \Pi_{k,j+1} &= E_{j}E_{j}^\top + A_{j} \Pi_{k,j} A_{j}^\top + A_{j} \Pi_{k,j} C_{j}^\top L_{k,j+1},
    \end{aligned}
\end{equation}
where $S$ and $\Pi$ are the cost-to-go and estimation error covariance matrices, respectively, defined recursively.
Given the controller gains ${K_{k,j}}$ and observer gains ${L_{k,j}}$, compute the closed-loop propagation operators in parallel via the two $T$ forward passes
\begin{equation}
\label{eq:lyap_forward}
\begin{aligned}
\bar{\P}^\x_{j,j} &= I,\\
\bar{\P}^\u_{k,j} &= K_{k,j}\bar{\P}^\x_{k,j},\quad
\bar{\P}^\x_{k+1,j} = (A_k + B_k K_{k,j})\bar{\P}^\x_{k,j},
\end{aligned}
\end{equation}
    and
\begin{equation} \label{eq:riccati_obs}
    \begin{aligned}
       & \hat{\P}^\x_{k,k} = I,\\
       & \hat{\P}^\y_{k,j} = \hat{\P}^\x_{k,j} L_{k,j}^\top,\quad \hat{\P}^\x_{k,j} = \hat{\P}^\x_{k,j+1}A_j + \hat{\P}^\y_{k,j+1}C_j,
    \end{aligned}
\end{equation}
for $j = 0,\ldots, T,~ k=j, \ldots, T-1$ and $k=0,\ldots, T,~ j=0,\ldots,k-1$ respectively.

Finally, assemble the matrices using $\mathbf{M} = \mathbf{I}-\mathbf{Z}\mathbf{A}$, 
\begin{subequations}\label{eq:slp_fast}
\begin{align}
\mathbf{\Phi}^{\mathrm{xw}} &= \bar{\boldsymbol{\Phi}}^{\mathrm{x}}
 + \hat{\boldsymbol{\Phi}}^{\mathrm{x}}
 - \bar{\boldsymbol{\Phi}}^{\mathrm{x}}\mathbf{M}\hat{\boldsymbol{\Phi}}^{\mathrm{x}},\quad
\label{eq:slp_fast_1}
\mathbf{\Phi}^{\mathrm{uw}} = \bar{\boldsymbol{\Phi}}^{\mathrm{u}}(I
 - \mathbf{M}\hat{\boldsymbol{\Phi}}^{\mathrm{x}}),\\
\mathbf{\Phi}^{\mathrm{xe}} &= \hat{\boldsymbol{\Phi}}^{\mathrm{y}}
 - \bar{\boldsymbol{\Phi}}^{\mathrm{x}}\mathbf{M}\hat{\boldsymbol{\Phi}}^{\mathrm{y}}, \quad
\mathbf{\Phi}^{\mathrm{ue}} = -\,\bar{\boldsymbol{\Phi}}^{\mathrm{u}}
\mathbf{M}\hat{\boldsymbol{\Phi}}^{\mathrm{y}}.
\label{eq:slp_fast_4}
\end{align}
\end{subequations}

\end{proposition}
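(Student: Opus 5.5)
We will prove Proposition~\ref{prop:riccati} by decoupling \eqref{eq:lqg} into two families of independent problems: a controller-side family solved by the backward Riccati recursions \eqref{eq:riccati}, and an estimator-side family solved by the Kalman recursions \eqref{eq:kalman_riccati_indices}. We then show that the assembly \eqref{eq:slp_fast} satisfies both affine constraints and attains the optimum.

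\emph{Step 1: parametrize the affine set.} Write $\M = \mathbf{I}-\Z\A$. The affine constraints \eqref{eq:slp_convex} are two-sided: the row constraint involves $\Z\B$ and the column constraint involves $\Z\C$. Take $\bar{\P}=(\bar{\P}^\x,\bar{\P}^\u)$, a block-lower-triangular solution of the state-feedback (row) constraint $[\M,\,-\Z\B]\bar{\P}=\mathbf{I}$. Take $\hat{\P}=(\hat{\P}^\x,\hat{\P}^\y)$, a solution of the dual filtering (column) constraint $\hat{\P}^\x\M - \hat{\P}^\y\Z\C=\mathbf{I}$.

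We will verify by direct substitution that \eqref{eq:slp_fast} satisfies \eqref{eq:slp_convex}. For the first block-row,
\[
\M\P^{\x\w}-\Z\B\P^{\u\w}=(\M\bar{\P}^\x-\Z\B\bar{\P}^\u)(\mathbf{I}-\M\hat{\P}^\x)+\M\hat{\P}^\x=\mathbf{I}.
\]
The $\e$-column and the column constraint follow symmetrically, using $\hat{\P}^\x\M-\hat{\P}^\y\Z\C=\mathbf{I}$.

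Conversely, we will argue that every feasible $\P$ can be written as this particular point plus a perturbation $\P + \bar{\mathbf{N}}\,\mathbf{\Delta}\,\hat{\mathbf{N}}$. Here $\bar{\mathbf{N}}$ spans the nullspace of $[\M,-\Z\B]$ and $\hat{\mathbf{N}}$ spans the left nullspace of $(\M,-\Z\C)$, both kept causal. This is the standard Youla-type affine description.

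\emph{Step 2: separation of the cost.} Expand the Frobenius objective as $\mathrm{tr}$ of a quadratic in $\P$ weighted by $\mathrm{blkdiag}(\E\E\T,\F\F\T)$. We will show that the cross term between the particular solution and the perturbation vanishes exactly when two sets of stationarity conditions hold:
\begin{itemize}
\item each column block $j$ of $\bar{\P}$ is the LQR closed-loop response from an impulse at time $j$, which gives \eqref{eq:riccati} and \eqref{eq:lyap_forward} with gains $K_{k,j}$ indexed by the injection time $j$;
\item each row block $k$ of $\hat{\P}$ is the optimal (Kalman) smoother/predictor response for estimating $x_k$ from outputs up to $k$, which gives \eqref{eq:kalman_riccati_indices} and \eqref{eq:riccati_obs}.
\end{itemize}
The key point is that the SLS cost decomposes column-wise for the controller and row-wise for the observer, because the Frobenius norm is a sum over columns (respectively rows). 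So for fixed $j$ the column problem is a finite-horizon LQR starting at time $j$ with terminal weight $P$. Dually, for fixed $k$ the row problem is a Kalman filtering problem initialized with covariance $\Xi\Xi\T$. Each is solved by dynamic programming, which yields the $T$ parallel recursions. The Kalman step is the transpose/adjoint of the LQR argument applied to $(A\T,C\T)$ with weights $(EE\T,FF\T)$.

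\emph{Step 3: optimality.} With the Step 2 conditions in force, the objective becomes the cost at the particular point plus a nonnegative quadratic in $\mathbf{\Delta}$. Hence $\mathbf{\Delta}=0$ is optimal, and this holds by convexity of \eqref{eq:lqg}.

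\emph{Main obstacle.} The hardest step is establishing that the cross term vanishes, i.e.\ the orthogonality of the perturbation directions to the gradient at \eqref{eq:slp_fast}. Two difficulties combine here:
\begin{itemize}
\item the weighting $\E\E\T$ and $\F\F\T$ must be propagated correctly through the sandwich $\bar{\P}(\cdot)\hat{\P}$;
\item the index conventions for time-varying gains $K_{k,j}$ and $L_{k,j}$ together with the one-step delay $\Z$ must be tracked.
\end{itemize}
I would first check the scalar LTI case with $T=2$ to fix the indexing. I would then prove orthogonality by writing the KKT conditions of \eqref{eq:lqg} with multipliers for both affine constraints, and verifying that the Riccati matrices $S_{k,j}$ and covariances $\Pi_{k,j}$ furnish those multipliers explicitly.
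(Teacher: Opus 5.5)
Your route is correct, but it differs from the paper's. Feasibility is handled the same way in both: direct substitution using $\M\bar{\P}^{\x}-\Z\B\bar{\P}^{\u}=\mathbf I$ and $\hat{\P}^{\x}\M-\hat{\P}^{\y}\Z\C=\mathbf I$. For optimality, the paper argues stochastically:
\begin{itemize}
\item it reads the objective of \eqref{eq:lqg} as an expected LQG cost under white noise;
\item it splits $\Delta x_k=\Delta\hat x_k+d_k$ with $\Delta\hat x_k$ the conditional mean, and uses orthogonality to get $J_{\mathrm{error}}=J_{\mathrm{ctrl}}+J_{\mathrm{obs}}$;
\item it then quotes LQR for the first term and the dual Kalman problem for the second.
\end{itemize}
You instead stay inside the convex program. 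You write the causal feasible set as $\P_0+\bar{\mathbf N}\mathbf{\Delta}\hat{\mathbf N}$ with $\bar{\mathbf N}=(\M^{-1}\Z\B,\mathbf I)$ and $\hat{\mathbf N}=[\Z\C\M^{-1},\,\mathbf I]$. Here $\mathbf\Delta$ is just the perturbation of $\Phiue$, so its causality is automatic, and you then check first-order optimality.

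This buys several things:
\begin{itemize}
\item a deterministic certificate for \eqref{eq:lqg} exactly as posed, with no conditional expectations and no identification of the SLS set with all causal linear policies;
\item no need for the paper's step treating $J_{\mathrm{ctrl}}$ and $J_{\mathrm{obs}}$ as depending on disjoint variables (strictly, the innovation term in $J_{\mathrm{ctrl}}$ depends on $L$, which the paper handles by certainty equivalence);
\item a direct explanation of the column-indexed Riccati and row-indexed Kalman families.
\end{itemize}
The paper's route is shorter and carries the estimator/controller intuition, but it leans on classical LQG theory.

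Your ``main obstacle'' can be closed without explicit multipliers. Set $\tilde{\mathbf Q}=\mathrm{blkdiag}(Q,\ldots,Q,P)$, $\mathbf W=\mathrm{blkdiag}(\tilde{\mathbf Q},\mathbf R)$ and $\mathbf G=\mathrm{blkdiag}(\E\E^\top,\F\F^\top)$. The cross term is $2\,\mathrm{tr}(\mathbf\Delta^\top\Gamma)$, and a short computation from \eqref{eq:slp_fast} gives
\[
\Gamma=\bar{\mathbf N}^\top\mathbf W\P_0\mathbf G\hat{\mathbf N}^\top
=\Gamma_{\mathrm{sf}}\big(\E\E^\top\M^{-\top}\C^\top\Z^\top-\M\Gamma_{\mathrm{ob}}\big)+\B^\top\Z^\top\M^{-\top}\tilde{\mathbf Q}\,\Gamma_{\mathrm{ob}},
\]
where $\Gamma_{\mathrm{sf}}=\B^\top\Z^\top\M^{-\top}\tilde{\mathbf Q}\bar{\P}^{\x}+\mathbf R\bar{\P}^{\u}$ and $\Gamma_{\mathrm{ob}}=\hat{\P}^{\x}\E\E^\top\M^{-\top}\C^\top\Z^\top+\hat{\P}^{\y}\F\F^\top$. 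These are exactly the stationarity residuals of your column-wise LQR and row-wise Kalman subproblems with respect to the free blocks of $\bar{\P}^{\u}$ and $\hat{\P}^{\y}$. So the recursions \eqref{eq:riccati} and \eqref{eq:kalman_riccati_indices} make both of them strictly block-upper-triangular.

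The remaining factors behave as follows. $\M^{-\top}\C^\top\Z^\top$ and $\Z^\top\M^{-\top}$ are strictly block-upper-triangular. The single subdiagonal of $\M$ is absorbed by the strictness of $\Gamma_{\mathrm{sf}}$ and $\Gamma_{\mathrm{ob}}$. Hence $\Gamma$ is strictly block-upper-triangular and orthogonal to every causal $\mathbf\Delta$; this is precisely where your one-step-delay bookkeeping enters.

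Two small points. First, you only need the ``if'' half of your ``exactly when''. Second, the block-lower-triangular restriction, which is only implicit in \eqref{eq:lqg}, is essential: $\Gamma\neq 0$ in general, so the assembled point is not optimal over the affine set once causality is dropped.
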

\begin{proof}
See Appendix \ref{appA_proof_SLP}.
\end{proof}
\noindent In the nonlinear constrained setting, we use this double-Riccati structure as an efficient computational primitive inside SCP; we discuss the resulting nonlinear algorithm in Sec. ~\ref{sec:method_scp_alg}.

\subsection{Implementation via SCP}\label{sec:method_scp_alg}

To solve~\eqref{eq:nonlinear_of_sls}, we adopt an \ac{SCP} approach~\cite{verschueren2022acados} that, at each iteration, constructs a convex \ac{SOCP} by linearizing the dynamics and convexifying the cost and constraint tightening around the current nominal trajectory $(\bm z,\bm v)$. Each subproblem is then solved using a structure-exploiting \ac{SLS} method based on~\cite{LEEMAN2024_fastSLS}, together with the double-Riccati construction of Proposition~\ref{prop:riccati} to construct feasible output-feedback responses $\Phi$ and their reachable sets. 
In practice, this amounts to combining the control-side state-feedback SLS solution with a time-varying Kalman recursion for state estimation. Because the extra estimation step only requires a standard Riccati recursion, the per-iteration complexity remains close to that of a single state-feedback fast-\ac{SLS} \ac{MPC} step. We note that the overall runtime of this SCP procedure can be reduced by leveraging GPU parallelization, following \cite{fang2026safe}. The resulting scheme is efficient and maintains feasibility, though we note the coupling is ultimately heuristic and does not yield optimality guarantees. In particular, as opposed to the state-feedback fast-SLS solver \cite{LEEMAN2024_fastSLS}, the dual-based cost update used within the unconstrained Riccati recursions \eqref{eq:lqg} is not based on the KKT structure of the constrained problem (analogous to \cite[Eq. 21]{LEEMAN2024_fastSLS}), and is hence a heuristic. Future work should focus on uncovering the banded numerical structure of the constrained SLS-based LQG. The full procedure is summarized in Algorithm~1.

\begin{tcolorbox}[title=\textit{Algorithm 1}: SCP-based Output-Feedback SLS,boxsep=0.1mm,left=2mm,right=3mm]
\textbf{Repeat until convergence (SCP loop):}
\begin{enumerate}
\item \textbf{Linearize} nonlinear dynamics/constraints around $(\bm z,\bm v)$; quadratize cost to obtain a convex \ac{SOCP} subproblem.\label{alg:initialize}
\item \textbf{Solve linear approximation:}
\begin{enumerate}
\item solve QP: update $(\Delta \bm z,\Delta \bm v)$,\label{alg:QP}
\item compute $\bar{\bm{\Phi}}^\x,\bar{\bm{\Phi}}^\u$ via fast-SLS \cite{LEEMAN2024_fastSLS} (parallel Riccati recursions).
\item compute $\hat{\bm{\Phi}}^\x,\hat{\bm{\Phi}}^\y$ (parallel Kalman recursions).
\item compute $\Phixw$, $\Phiuw$, $\Phixe$, $\Phiue$ via \eqref{eq:slp_fast} 
\item compute constraint tightening \eqref{eq:nonlin_constraint_tight}--\eqref{eq:nonlin_constraint_tight_tau}
\item set $(\bm z,\bm v)\leftarrow (\bm z,\bm v)+(\Delta \bm z,\Delta \bm v)$
\end{enumerate}
\item \textbf{Repeat} 
\end{enumerate}
\end{tcolorbox}

\section{Results}
\label{sec:results}

\looseness-1We evaluate our output-feedback \ac{SLS} pipeline on four systems with uncertain nonlinear dynamics and uncertain partial observations: a 4D car with overhead RGB images (Sec. \ref{sec:results_car}), a 10D quadrotor with onboard RGB images (Sec. \ref{sec:results_quad}), a humanoid with partial state sensing (Sec. \ref{sec:results_humanoid}), and a real-world TurtleBot with onboard RGB images (Sec. \ref{sec:results_hardware}). For intuition, we also evaluate on the classic light-dark benchmark \cite{DBLP:conf/rss/PlattTKL10} (Sec. \ref{sec:results_lightdark}) and perform a baseline comparison against a recent safe perception-based control algorithm \cite{yang2023safeperceptionbasedcontrolstochastic} (Sec. \ref{sec:results_baselines}). We demonstrate (i) robust constraint satisfaction via tube tightening, (ii) information-gathering motion emerging from the \ac{SLS} design even for large state dimensions and RGB observations. For the car and quadrotor, we use DINOv2-vitg14 and DINOv2-vitl14 \cite{DBLP:journals/corr/abs-2304-07193} as the pre-trained VFM, leveraging 512x512 RGB images rendered in PyBullet, respectively with $3\times 10^5$ and $2\times 10^6$ datapoints to train $\proj(\cdot)$ and $C^r$, and fit a polynomial error bound with \eqref{eq:optimal_polynomial}. 
To implement Alg. 1, we use CasADi \cite{andersson2019casadi}, using IPOPT \cite{wachter2006implementation} to find an initial guess for $(\mathbf{z}, \mathbf{v})$ (Step \ref{alg:initialize} of Alg. \ref{alg:initialize}) and OSQP \cite{stellato2020osqp} to solve the QP in Step \ref{alg:QP} of Alg. 1 \footnote{The implementation and experiment code are available at \href{https://github.com/trustworthyrobotics/VISION-SLS}{https://github.com/trustworthyrobotics/VISION-SLS}.}. Alg. 1 terminates if $\Vert (\Delta z, \Delta v)\Vert_2 \leq 0.001$ or after 20 iterations.
We compare Alg. 1 to several baselines, including 1) a certainty equivalent (CE) controller \cite{dean2021certainty, todorov2005generalized}, which assumes the separation principle, akin to iLQG, by optimizing a reference and controller under the assumption of perfect observations, i.e., with post-hoc tubes, which corresponds to the first iteration of our algorithm
2) a non-robust (NR) belief-space controller inspired by \cite{DBLP:conf/rss/PlattTKL10}, which performs information gathering but does not perform constraint tightening to robustly ensure safety under uncertainty.
We define success as reaching the terminal set without collision and report success rate (SR) and constraint violation rate (CVR) of our method and baselines over 15 random initial states each with 50 rollouts under random disturbance sequence in Table \ref{table:1}.   

{\small
\begin{table}[h!]
\centering
    \begin{tabular}{|c|c|c|c|c|c|c|}
        \hline
        ~ & \multicolumn{3}{|c|}{Light-dark} & \multicolumn{3}{|c|}{4D Car} \\ \hline
        ~ & Ours & CE & NR & Ours & CE & NR \\ \hline
        SR & \textbf{100\%} & 90.13\% & 87.73\% & \textbf{98.53\%} & 93.47\% & 48.13\%  \\
        CVR & \textbf{0\%} & 9.87\% & 12.27\% & \textbf{1.6\%} & 6.53\% & 55.87\%  \\ \hline
        ~ & \multicolumn{3}{|c|}{Quadrotor} & \multicolumn{3}{|c|}{Humanoid} \\\hline
        ~ & Ours & CE & NR & Ours & CE & NR \\\hline
        SR & \textbf{100\%} & 100\% & 29.47\% & \textbf{100\%} & 100\% & 2\% \\
        CVR & \textbf{0\%} & 0\% & 92.8\% & \textbf{0\%} & 0\% & 98\% \\ \hline
    \end{tabular}
\caption{Comparison between our method, certainty equivalent (CE) and non-robust (NR) baseline in terms of success rate (SR) and constraint violation rate (CVR).\vspace{-20pt}}
\label{table:1}
\end{table}
}

\subsection{Safe Perception-Based Control Baseline}\label{sec:results_baselines}
To compare the performance of VISION-SLS compared to recent safe perception-based control methods, we compare with \cite{yang2023safeperceptionbasedcontrolstochastic} in their F1/10th autonomous driving setting \cite[Sec. 5]{yang2023safeperceptionbasedcontrolstochastic}, which assumes a 3D vehicle state $[p_x, p_y, \theta] \in \mathbb{R}^3$ consisting of translation $(p_x, p_y)$ and orientation $\theta$. We use the dataset generated by the code implementation of \cite{yang2023safeperceptionbasedcontrolstochastic} to train our perception module and error bound. Notably, \cite{yang2023safeperceptionbasedcontrolstochastic} leverages a 64-dimensional LiDAR observation instead of pixel observations; our method can flexibly be applied to this setting by learning a projection function $p: \mathbb{R}^{64} \rightarrow \mathbb{R}^3$, setting $h^r(x) = x$, and calibrating the error bound $F^r(x_k)$ using the procedure of \eqref{eq:optimal_polynomial}. VISION-SLS achieves 100\% safety, higher than the 93\% reported by \cite{yang2023safeperceptionbasedcontrolstochastic}, since our method jointly synthesizes a trajectory and robust controller that provides hard guarantees on safety given a valid overbound $F^r$.

\subsection{Light-Dark}
\label{sec:results_lightdark}

We first show the information-gathering abilities, runtime scaling with horizon length, and optimality of our method in the light-dark domain~\cite{DBLP:conf/rss/PlattTKL10}, a benchmark for partially-observed control. We consider a 2D single integrator with state $x = (p_x, p_y)$, with state, input, and terminal state constraints (see App. \ref{app:light_dark}).
Alg. 1 converges after 17 SCP iterations in $1.171$ seconds. 
 By penalizing the tube volume in \eqref{eq:nonlinear_of_sls_cost}, our method steers the system to lower perception-error regions (Fig. \ref{fig:light_dark}c), which reduces tube growth and enables information-gathering while satisfying the terminal constraints. In contrast, the CE baseline experiences larger perception error, causing larger tubes and terminal constraint violation. All $10$ simulated closed-loop rollouts stay in their computed reachable sets, as expected. Moreover, we show that our method scales much better than a na\"ive SLS output feedback controller synthesized via convex optimization  \cite{DBLP:journals/arc/AndersonDLM19, zhou2023safe} that does not use the double Riccati structure (Fig. \ref{fig:light_dark}a) while achieving the same cost (Fig. \ref{fig:light_dark}b), empirically validating the optimality result of Prop. \ref{prop:riccati}. Our method always reaches the terminal set while CE fails in 9.87\% and NR fails in 12.27\% of all trials (see Table \ref{table:1}).

\begin{figure}
    \centering
    \includegraphics[width=\linewidth]{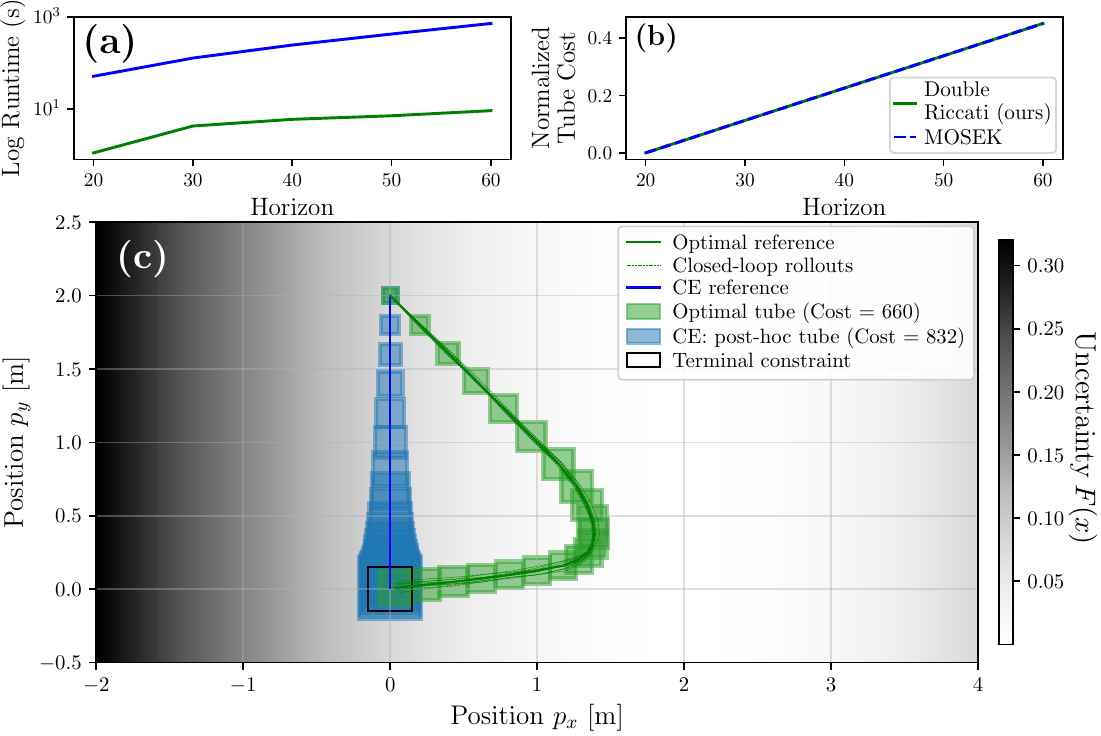}
    \vspace{-23pt}
    \caption{On the classic light-dark domain, our method optimally visits the low perception-error region, where lighter shades indicate lower error (c), reducing final state uncertainty, whereas a na\"ive CE controller accumulates more uncertainty over the horizon. Computationally, our method is much faster than a baseline convex optimization-based SLS solver \cite{DBLP:journals/arc/AndersonDLM19, zhou2023safe}, implemented via MOSEK \cite{mosek} (a) while achieving the same optimal solution cost (b).
    }\vspace{-10pt}
    \label{fig:light_dark}
\end{figure}

\subsection{4D Car with Overhead RGB Images}
\label{sec:results_car}

\begin{figure*}[!htbp]
    \centering
    \includegraphics[width=\linewidth]{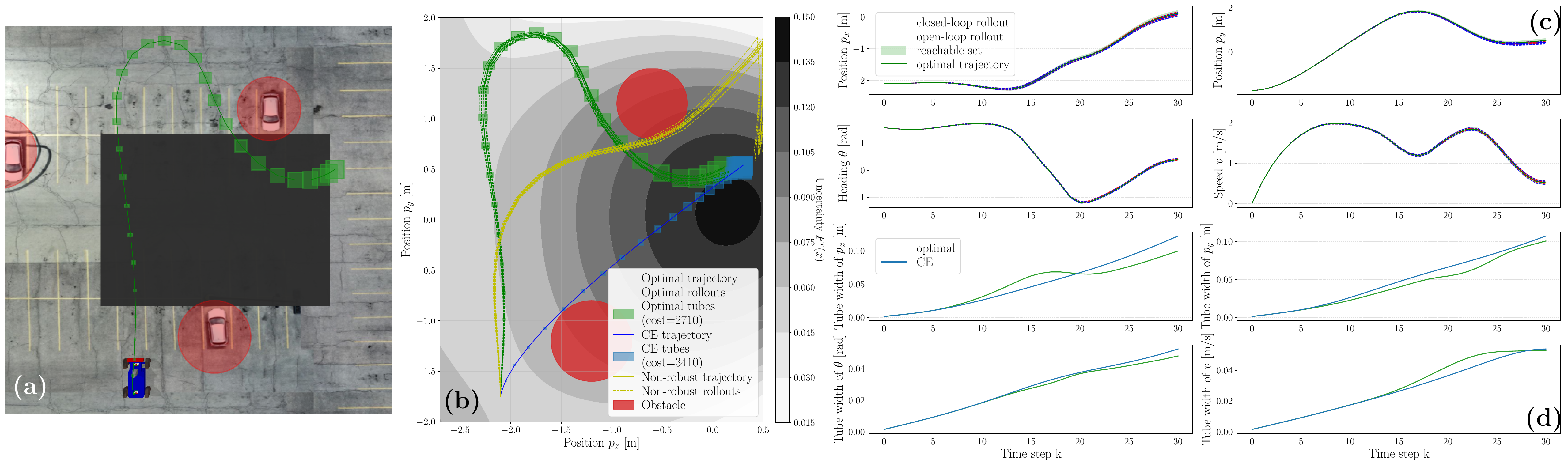}
    \vspace{-23pt}
    \caption{\looseness-1(a) Car in an occluded parking lot with a top-down camera avoiding red obstacles. (b) Grayscale shows the learned uncertainty for the reduced observation (darker = higher uncertainty/occlusion). Our controller tightens constraints using the propagated set and yields a collision-free trajectory. In contrast, the certainty-equivalent trajectory stays longer in the high-uncertainty occluded region. (c) Closed-loop rollouts in the occluded scene, overlaying the nominal plan, the \ac{SLS}-induced uncertainty set (for constraint tightenings), and sample closed-loop rollouts. (d) Comparing reachable tube size per state dimension. We minimize a weighted tube-volume cost, with a stronger penalty on the terminal tube, encouraging information gathering to reduce final-time uncertainty. 
    }\vspace{-10pt}
    \label{fig:unicycle_perception}
\end{figure*}

\looseness-1In the vision-based driving task shown in Fig.~\ref{fig:unicycle_perception}, we steer a 4D car with state $(p_x, p_y, p_\theta, v)$ (position, rotation, linear velocity) \cite{chou2021model} to a specified goal over a horizon $T=30$ with a timestep of $0.15$s, enforcing state and input constraints, avoiding collisions with other cars modeled as circular obstacles, while minimizing the propagated uncertainty along the trajectory. More details are in App. \ref{app:parking_lot}. 
To clearly show information-gathering in the vision-based control setting, a heavy visual occlusion (0.97 opacity) is placed in the middle of the environment, increasing the worst-case measurement error for those states. 
In Fig. \ref{fig:unicycle_perception}, we visualize the learned {state-dependent polynomial overbound} $F^r(x)$ as a grayscale heatmap over the $(p_x,p_y)$ workspace.  The calibration of the perception error bounds are given in App. \ref{perception_calibration_appendix}. 
The coverage of error bound over-approximation (tested at 500 randomly sampled states) vs. number of calibration points, presented in Table \ref{table:3}, demonstrates the data-efficiency of our perception calibration procedure. Notably, our method exceeds 90\% empirical coverage with only 50 samples.
The optimal plan produced by Alg.~1 (computed in 3.584s over 
$20$ SCP iterations) is shown in green in Fig.~\ref{fig:unicycle_perception}. The resulting trajectory avoids the occluded region to actively reduce perception uncertainty, whereas the CE baseline drives through the occlusion and incurs larger tube widths (Fig.~\ref{fig:unicycle_perception}d).
Closed-loop rollouts using DINO-v2 features remain in the computed reachable tubes and satisfy all constraints despite heavy occlusion (Fig.~\ref{fig:unicycle_perception}c). In contrast, the non-robust baseline violates safety under disturbance, highlighting the necessity of constraint tightening for vision-based control.

In this setting, we also compare our method against a variant of \cite{tong2023enforcingsafetyvisionbasedcontrollers} in which we replace the NeRF model with a convolutional neural network trained on our perception dataset to predict the minimum distance to obstacles (to better match our constrained single-camera-view setting). The SR and CVR of the variant of \cite{tong2023enforcingsafetyvisionbasedcontrollers} are 65.47\% and 27.93\% respectively, indicating our method outperforms all baselines with 98.53\% SR and 1.6\% CVR. The variant of \cite{tong2023enforcingsafetyvisionbasedcontrollers} performs poorly here because its formulation makes it difficult to sample safe actions for nonlinear dynamics. Moreover, \cite{tong2023enforcingsafetyvisionbasedcontrollers} does not formally account for robust constraint satisfaction under disturbances, and thus cannot counteract the error of the distance predictor. 

{\small
\begin{table}[h!]
\vspace{-3pt}
\centering
    \begin{tabular}{|c|c|c|c|c|c|c|}
    \hline
        calibration points & 50 & 100 & 250 & 500 & 1500 \\
        empirical coverage & 91.4\% & 92.2\% & 97.4\% & 99\% & 99\% \\ 
    \hline
    \end{tabular}
\caption{Error bound over-approximation coverage vs. number of calibration points.\vspace{-20pt}}
\label{table:3}
\end{table}
}

\subsection{10D Quadrotor with Onboard RGB Images}\label{sec:results_quad}

\begin{figure}
    \centering
    \includegraphics[width=\linewidth]{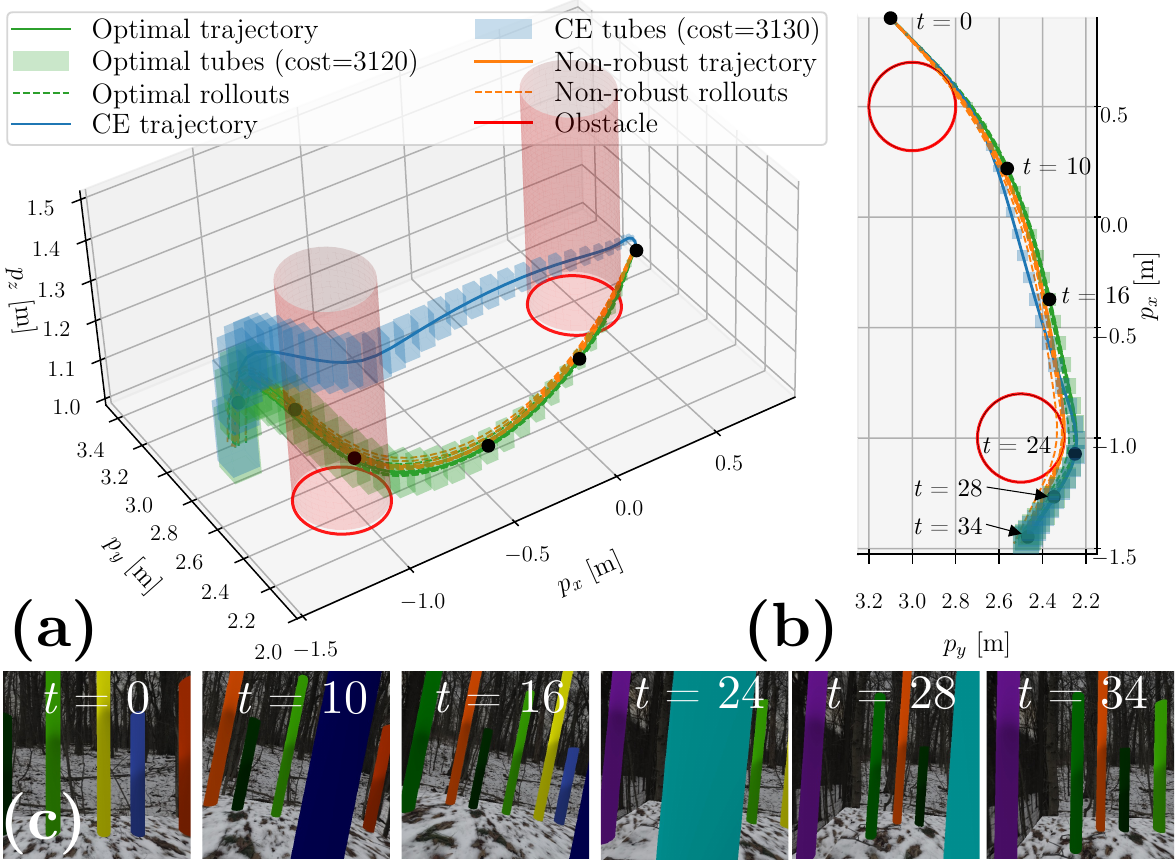}\vspace{-10pt}
    \caption{Controlling a quadcopter with an onboard camera, showcasing obstacle avoidance despite disturbances from perception error and dynamics uncertainty. (a) Our closed-loop rollouts stay within the computed tubes, reaching the target with smaller uncertainty bounds compared to the CE baseline, maintaining safety while executing a non-robust information-gathering policy does not. (b) A top-down view of (a). (c) Snapshots of the images collected onboard to implement feedback control.} \vspace{-5pt}
    \label{fig:quad}
\end{figure}

We evaluate on a vision-based flying task (Fig.~\ref{fig:quad}), steering a 10D quadrotor model~\cite{chou2021model} in a cluttered 3D environment to a goal under partial observability (see App.~\ref{app:quad}). The resulting prediction tubes robustly avoid obstacles despite perception uncertainty, whereas both the certainty-equivalent controller and the non-robust information-gathering baseline collide with obstacles, even though their nominal trajectories are collision-free. In addition, tight state constraints (e.g., pitch) are respected throughout the horizon due to robust constraint tightening. Our method succeeds and never violates constraints in all trials. The CE baseline also succeeds in all trials, but produces larger tubes because it cannot perform information gathering. In contrast, the NR baseline succeeds only in 29.47\% trials (see Table \ref{table:1}). 
Overall, these results show robust constraint satisfaction for vision-based control tasks on higher-dimensional dynamics. 

\subsection{Humanoid with Partial State Measurement}\label{sec:results_humanoid}

We demonstrate applicability to high-dimensional systems with partial state measurements on a Unitree G1 humanoid~\cite{kuindersma2016optimization} with 59 states and 23 inputs modeled in Pinocchio~\cite{carpentier2019pinocchio}. 
We consider a vision-free setting to isolate scalability with respect to the state dimension from scalability with respect to the observation dimension, since the former primarily impacts online control synthesis whereas the latter is handled offline by the perception reduction module (see Remark \ref{rem:lidar}).
Hence, only joint angles and base pose are observed (no vision), and state and input constraints must be satisfied (see App.~\ref{app:humanoid} for details and all reachable tubes). The task is robust tracking of a backflip trajectory under disturbance (Fig.~\ref{fig:humanoid_and_hardware}a). In Fig. \ref{fig:humanoid_and_hardware}b, open-loop rollouts (red) exit the reachable tubes and violate constraints (e.g., quaternion limits), whereas all closed-loop rollouts (blue) remain within the tubes and satisfy constraints, demonstrating effective output-feedback optimization and joint reachability–trajectory optimization for partially observed large-scale dynamics.
Our method always steers the humanoid to the terminal set without constraint violation despite the uncertainties and partial observation. In this example, the CE baseline also yields the same solution because the perception uncertainty is uniformly bounded over the space. However, the NR baseline performs far worse than VISION-SLS, achieving only 2\% SR and 98\% CVR (Table~\ref{table:1}), because even small disturbances can be amplified by the complex nonlinear humanoid dynamics.

\vspace{-2pt}
\subsection{Hardware experiment}\label{sec:results_hardware}
\vspace{-1pt}

We evaluate our method on a physical TurtleBot~4 platform operating in an office environment (Fig.~\ref{fig:humanoid_and_hardware}d). The robot is tasked with reaching a goal position under state and input constraints, including collision avoidance with furniture. 
See App. \ref{app:hardware} for detailed experiment description.
Using onboard visual feedback, the closed-loop controller robustly tracks the planned trajectory across $3$ real-world runs (Fig. \ref{fig:humanoid_and_hardware}c). 
Fig.~\ref{fig:humanoid_and_hardware}e shows representative onboard images used during execution.
All executions remain within the computed reachable tubes and satisfy collision-avoidance constraints, demonstrating safe perception-based output-feedback control on hardware.

\section{Conclusion}
We introduced a tractable framework for safe nonlinear output-feedback planning and control directly from pixels. The key ingredients are (i) a learned low-dimensional observation map that turns high-dimensional images into reduced measurements with calibrated error bounds, and (ii) an \ac{SLS}-based parameterization of dynamic output-feedback as causal affine policies around a nominal trajectory with robust constraint satisfaction.
After training the perception map and error bounds from data, the full nonlinear output-feedback synthesis problem is solved efficiently via SCP and an \ac{SLS} solver based on Riccati recursions. Empirically, the method enables information-gathering behavior and robust constraint satisfaction on high-dimensional, underactuated systems with realistic visual observations.
Overall, the results suggest that pairing visual representation learning with system-level synthesis is a practical path to safe visuomotor control at scale. Future work includes total occlusion handling and learned dynamics, and an optimal solver for the constrained version of \eqref{eq:lqg}.

\newpage
\bibliographystyle{plainnat}
\bibliography{IEEEexample}

\newpage
\appendix

\subsection{Proof of Proposition \ref{prop:linear_robust_constraint_satisfaction}}
\label{appA_proof_SLC}
\begin{proof}
    As per \eqref{eq:closed_loop_phi}, the state and input trajectories are affine images of $\ell_\infty$ balls, hence can be written as Minkowski sums
    \begin{equation}
    \begin{aligned}
        &\mathbf{x} \in \bar{\mathbf{x}} \oplus \Phixw \E \mathcal{B}^{\nx T} \oplus \Phixe \F\mathcal{B}^{\nx T},\\
         &\mathbf{u} \in \bar{\mathbf{u}} \oplus \Phiuw \E\mathcal{B}^{\nx T} \oplus \Phiue \F\mathcal{B}^{\nx T},
    \end{aligned}
\end{equation}
By \eqref{eq:closed_loop_phi}, $(\mathbf{x},\mathbf{u})$ are affine images of $\ell_\infty$ balls, i.e., Minkowski sums. Robust feasibility of \eqref{eq:constraints} is the support function of the disturbance set in direction $c_i$. Since the support function of a (scaled) $\ell_\infty$ ball equals a weighted $\ell_1$ norm, \eqref{eq:robust_constraint} follows.
\end{proof}

\subsection{Proof of Proposition \ref{prop:riccati}}
\label{appA_proof_SLP}
\begin{proof}
The proof proceeds in two parts. We first show that the closed-loop responses constructed in \eqref{eq:slp_fast} are feasible, i.e., they satisfy the \ac{SLS} constraints \eqref{eq:slp_convex}. We then show that, among all feasible responses, this construction is optimal for the quadratic objective \eqref{eq:lqg}.

\paragraph{Feasibility}
Define $\M \defeq \Iblk - \Z\A$. We show that the matrices in \eqref{eq:slp_fast} satisfy the \ac{SLS} constraints \eqref{eq:slp_convex} and are block-lower-triangular.

First, let $\bar{\bm{\Phi}}^{\x},\bar{\bm{\Phi}}^{\u}$ denote the state-feedback closed-loop responses associated with the gains $K_{k,j}$, constructed by the forward recursion \eqref{eq:lyap_forward}. By construction, they satisfy the stacked closed-loop relation
\begin{equation}\label{eq:bar_phi_id}
\M\,\bar{\bm{\Phi}}^{\x} - \Z\B\,\bar{\bm{\Phi}}^{\u} = \Iblk.
\end{equation}

Second, let $\hat{\bm{\Phi}}^{\x},\hat{\bm{\Phi}}^{\y}$ denote the observer-side operators associated with the gains $L_{k,j}$, constructed by the forward recursion in Proposition~\ref{prop:riccati}. These operators satisfy the right-inverse identity
\begin{equation}\label{eq:hat_phi_id}
\hat{\bm{\Phi}}^{\x}\,\M - \hat{\bm{\Phi}}^{\y}\,\Z\C = \Iblk .
\end{equation}

We now verify \eqref{eq:slp_convex} for the assembled matrices \eqref{eq:slp_fast}. Using \eqref{eq:slp_fast_1}--\eqref{eq:slp_fast_4}, \eqref{eq:bar_phi_id}, and \eqref{eq:hat_phi_id}, we get

\smallskip\noindent{(i) Left constraint, $\w$-channel in (\ref{eq:lqg}b):}
\begin{align*}
&\M \Phixw - \Z\B\,\Phiuw\\
&= \M\!\left(\bar{\bm{\Phi}}^{\x} + \hat{\bm{\Phi}}^{\x} - \bar{\bm{\Phi}}^{\x}\M\hat{\bm{\Phi}}^{\x}\right)
 - \Z\B\!\left(\bar{\bm{\Phi}}^{\u} - \bar{\bm{\Phi}}^{\u}\M\hat{\bm{\Phi}}^{\x}\right) \\
&= \left(\M\bar{\bm{\Phi}}^{\x} - \Z\B\bar{\bm{\Phi}}^{\u}\right)
+ \M\hat{\bm{\Phi}}^{\x}
- \left(\M\bar{\bm{\Phi}}^{\x} - \Z\B\bar{\bm{\Phi}}^{\u}\right)\M\hat{\bm{\Phi}}^{\x} \\
&= \Iblk + \M\hat{\bm{\Phi}}^{\x} - \Iblk\,\M\hat{\bm{\Phi}}^{\x}
= \Iblk .
\end{align*}

\smallskip\noindent{(ii) Left constraint, $\e$-channel in (\ref{eq:lqg}b):}
\begin{align*}
&\M \Phixe - \Z\B\,\Phiue\\
&= \M\!\left(\hat{\bm{\Phi}}^{\y} - \bar{\bm{\Phi}}^{\x}\M\hat{\bm{\Phi}}^{\y}\right)
 - \Z\B\!\left(-\bar{\bm{\Phi}}^{\u}\M\hat{\bm{\Phi}}^{\y}\right) \\
&= \M\hat{\bm{\Phi}}^{\y}
- \left(\M\bar{\bm{\Phi}}^{\x} - \Z\B\bar{\bm{\Phi}}^{\u}\right)\M\hat{\bm{\Phi}}^{\y} \\
&= \M\hat{\bm{\Phi}}^{\y} - \Iblk\,\M\hat{\bm{\Phi}}^{\y}
= \mathbf{0}.
\end{align*}

\smallskip\noindent{(iii) Right constraint, $\w$-channel in (\ref{eq:lqg}c):}
\begin{align*}
&\Phixw\,\M - \Phixe\,\Z\C\\
&= \left(\bar{\bm{\Phi}}^{\x} + \hat{\bm{\Phi}}^{\x} - \bar{\bm{\Phi}}^{\x}\M\hat{\bm{\Phi}}^{\x}\right)\M
 - \left(\hat{\bm{\Phi}}^{\y} - \bar{\bm{\Phi}}^{\x}\M\hat{\bm{\Phi}}^{\y}\right)\Z\C \\
&= \bar{\bm{\Phi}}^{\x}\M + \left(\hat{\bm{\Phi}}^{\x}\M - \hat{\bm{\Phi}}^{\y}\Z\C\right)
- \bar{\bm{\Phi}}^{\x}\M\left(\hat{\bm{\Phi}}^{\x}\M - \hat{\bm{\Phi}}^{\y}\Z\C\right) \\
&= \bar{\bm{\Phi}}^{\x}\M + \Iblk - \bar{\bm{\Phi}}^{\x}\M\,\Iblk
= \Iblk .
\end{align*}

\smallskip\noindent{(iv) Right constraint, $\e$-channel in (\ref{eq:lqg}c):}
\begin{align*}
&\Phiuw\,\M - \Phiue\,\Z\C\\
&= \left(\bar{\bm{\Phi}}^{\u} - \bar{\bm{\Phi}}^{\u}\M\hat{\bm{\Phi}}^{\x}\right)\M
 - \left(-\bar{\bm{\Phi}}^{\u}\M\hat{\bm{\Phi}}^{\y}\right)\Z\C \\
&= \bar{\bm{\Phi}}^{\u}\M - \bar{\bm{\Phi}}^{\u}\M\left(\hat{\bm{\Phi}}^{\x}\M - \hat{\bm{\Phi}}^{\y}\Z\C\right) \\
&= \bar{\bm{\Phi}}^{\u}\M - \bar{\bm{\Phi}}^{\u}\M\,\Iblk
= \mathbf{0}.
\end{align*}

Finally, $\bar{\bm{\Phi}}^{\x},\bar{\bm{\Phi}}^{\u},\hat{\bm{\Phi}}^{\x},\hat{\bm{\Phi}}^{\y}$ are block-lower-triangular by construction, and the set of block-lower-triangular operators is closed under addition and multiplication; hence $\Phixw,\Phiuw,\Phixe,\Phiue$ are block-lower-triangular. Therefore, the matrices defined by \eqref{eq:slp_fast} satisfy \eqref{eq:slp_convex}, and provide a feasible solution to Problem~\eqref{eq:lqg}.

\paragraph{Optimality}
We now show that this feasible solution is optimal for the quadratic objective.
We adopt a stochastic disturbance model in this section to leverage the classical LQG separation structure underlying the quadratic objective in \eqref{eq:lqg} under SLS parameterization.

The error dynamics following \eqref{eq:ltv} and \eqref{eq:ltv_nominal} is 
\begin{align}
\Delta x_{k+1} &= A_k \Delta x_k + B_k \Delta u_k + E_k w_k, \label{eq:lqg_dyn}\\
\Delta y_{k+1} &= C_k \Delta x_k + F_k e_k, \label{eq:lqg_obs}
\end{align}
where $\{w_k\}$ and $\{e_k\}$ are zero-mean, mutually independent stochastic disturbances with identity covariance, independent across time. 

The objective in \eqref{eq:lqg} corresponds to the expected finite-horizon quadratic cost \eqref{eq:lqg_expected_cost} since we can substitute $\Delta x_k$ and $\Delta u_k$ with $\Phixw, \Phixe, \Phiuw, \Phiue$ defined in \eqref{eq:closed_loop_phi}
\begin{equation}
J_{\mathrm{error}}=\mathbb{E}\!\left[
\sum_{k=0}^{T-1}
\begin{bmatrix} \Delta x_k \\ \Delta u_k \end{bmatrix}^{\!\top}
\!\begin{bmatrix} Q & 0 \\ 0 & R \end{bmatrix}
\!\begin{bmatrix} \Delta x_k \\ \Delta u_k \end{bmatrix}
+ \Delta x_T^\top P \Delta x_T
\right].
\label{eq:lqg_expected_cost}
\end{equation}

We decompose the state error into its estimate and estimation error,
\[
\Delta x_k = \Delta \hat x_k + d_k,
\]
where $\Delta \hat x_k := \mathbb{E}[\Delta x_k \mid y_{1:k}]$ and $d_k := \Delta x_k - \Delta \hat x_k$.

Substituting this decomposition into the quadratic cost yields
\begin{align}
\Delta x_k^\top Q \Delta x_k
&= (\Delta\hat x_k + d_k)^\top Q (\Delta\hat x_k + d_k) \nonumber\\
&= \Delta\hat x_k^\top Q \Delta\hat x_k
+ d_k^\top Q d_k
+ 2 \Delta\hat x_k^\top Q d_k.
\label{eq:quad_split}
\end{align}
Taking expectation and using the orthogonality of $\Delta\hat x_k$ and $d_k$ under conditional expectation gives
\[
\mathbb{E}[\Delta\hat x_k^\top Q d_k] = 0,
\]
so the expected cost decomposes exactly as
\begin{equation}
J_{\mathrm{error}} = J_{\mathrm{ctrl}} + J_{\mathrm{obs}},
\label{eq:cost_decomposition}
\end{equation}
with
\begin{align}
J_{\mathrm{ctrl}}
&:= \mathbb{E}\!\left[
\sum_{k=0}^{T-1}
\begin{bmatrix} \Delta\hat x_k \\ \Delta u_k \end{bmatrix}^{\!\top}
\!\begin{bmatrix} Q & 0 \\ 0 & R \end{bmatrix}
\!\begin{bmatrix} \Delta\hat x_k \\ \Delta u_k \end{bmatrix}
+ \Delta\hat x_T^\top P \Delta\hat x_T
\right], \label{eq:J_ctrl}\\
J_{\mathrm{obs}}
&:= \mathbb{E}\!\left[
\sum_{k=0}^{T-1} d_k^\top Q d_k + d_T^\top P d_T
\right]. \label{eq:J_obs}
\end{align}

\textbf{Control subproblem.}
The state estimate evolves according to
\begin{align} \notag
\Delta\hat x_{k+1}
& = A_k \Delta\hat x_k + B_k \Delta u_k
+ L_{k+1}\!\left(\Delta y_{k+1} - C_k \Delta\hat x_k\right),\\
\Delta\hat x_{k+1}
& = A_k \Delta\hat x_k + B_k \Delta u_k
+ L_{k+1}(C_kd_k + F_ke_k),
\label{eq:estimator_dyn}\\
d_{k+1} & = \Delta x_{k+1} - \Delta \hat{x}_{k+1} \\ \notag
& = A_k d_k  - L_{k+1} C_{k} d_{k} + E_k w_k - L_{k+1} F_{k}e_{k}, 
\end{align}
which follows from \eqref{eq:lqg_dyn}–\eqref{eq:lqg_obs} and the definition of $\Delta\hat x_k$.
Conditioned on the observations, \eqref{eq:estimator_dyn} is linear in $(\Delta\hat x_k, \Delta u_k)$
and $d_k$ is a self-evolving variable that is not controlled by $u_k$.
Minimizing $J_{\mathrm{ctrl}}$ over causal policies $\Delta u_k = K_k \Delta\hat x_k$ is therefore a finite-horizon LQR problem.

Equivalently, if we treat $L_{k+1}(C_kd_k + F_ke_k)$ as some disturbance independent of $x_k$ and $u_k$, in the system-level parameterization, this problem can be written as a convex quadratic optimization over closed-loop responses $(\mathbf{\bar\Phi}^x, \mathbf{\bar\Phi}^u)$: 

\begin{align} \label{eq:controller_optimization}
    \min_{\mathbf{\bar\Phi}^x, \mathbf{\bar\Phi}^u} & 
    \left\|
    \begin{bmatrix}
        \mathbf{Q}^{1/2} & \mathbf{0}\\
        \mathbf{0} & \mathbf{R}^{1/2}
    \end{bmatrix}
    \begin{bmatrix}
        \mathbf{\bar\Phi}^{x}\\
        \mathbf{\bar\Phi}^{u}
    \end{bmatrix}
    \right\|_\Frob^2
    +\left\|P^{1/2}\mathbf{\bar\Phi}^{x}_T \right\|_\Frob^2 \\ \notag
    \text{s.t. } & 
    \begin{bmatrix}
        \mathbf{I}-\mathbf{ZA} & -\mathbf{ZB}
    \end{bmatrix}
    \begin{bmatrix}
        \mathbf{\bar\Phi}^{x}\\
        \mathbf{\bar\Phi}^{u}
    \end{bmatrix}
    = \mathbf{I}
\end{align}
The unique optimal solution is characterized by a backward Riccati recursion, as in \cite{LEEMAN2024_fastSLS}.

\textbf{Estimation subproblem.}
The estimation error evolves as
\begin{equation}
d_{k+1}
= (A_k - L_{k+1} C_k) d_k
+ E_k w_k - L_{k+1} F_k e_k.
\label{eq:error_dyn}
\end{equation}

With the stacked variables, we have 
\begin{align} \notag
    \mathbf{d} = \mathbf{ZAd} - \mathbf{LZCd} + \mathbf{w} - \mathbf{Le} = \mathbf{\hat\Phi}^x \mathbf{w} + \mathbf{\hat\Phi}^y \mathbf{e}. 
\end{align}
Then 
\begin{align} \notag
    \mathbf{\hat\Phi}^{x} = & \mathbf{(I - ZA + ZLZC)}^{-1}, \\ \notag 
    \mathbf{\hat\Phi}^{y} = & -\mathbf{(I - ZA + ZLZC)}^{-1}\mathbf{ZL}.
\end{align}

Substituting $d_k$ with $\mathbf{\hat \Phi}^x$ and $\mathbf{\hat \Phi}^y$ in \eqref{eq:J_obs}, we get 
\begin{align} \notag
    J_{\textrm{obs}} = & \mathbb{E}\!\left[ \left\|
    \mathbf{Q}^{1/2} (\mathbf{\hat \Phi}^x \mathbf{E w} + \mathbf{\hat \Phi^y} \mathbf{F e})
    \right\|_\Frob^2 \right] + \\ \notag
    & \mathbb{E}\!\left[
    \left\|P^{1/2}(\mathbf{\hat{\Phi}}^{x}_T \mathbf{E w} + \mathbf{\hat \Phi}^y_T\mathbf{Fe})\right\|_\Frob^2 
    \right]
\end{align}
Minimizing $J_{\mathrm{obs}}$ corresponds to selecting the observer gains $\{L_k\}$, or equivalently the observer system responses $(\hat\Phi^x,\hat\Phi^y)$, to minimize the expected quadratic energy of $d_k$ as shown in (\ref{eq:observer_optimization}). 
This is a dual LQR problem whose solution is given by the (time-reversed) Kalman Riccati recursion.
\begin{align} \label{eq:observer_optimization}
    \min_{\mathbf{\hat\Phi}^x, \mathbf{\hat\Phi}^y} & \left\|
    \mathbf{Q}^{1/2} 
    \begin{bmatrix}
        \mathbf{\hat{\Phi}}^{x} & \mathbf{\hat{\Phi}}^{y}
    \end{bmatrix}
    \begin{bmatrix}
        \mathbf{E}\\
        \mathbf{F}
    \end{bmatrix}
    \right\|_\Frob^2
    +\left\|P^{1/2}\begin{bmatrix}
        \mathbf{\hat{\Phi}}^{x}_T & \mathbf{\hat{\Phi}}^{y}_T
    \end{bmatrix}
    \begin{bmatrix}
        \mathbf{E}\\
        \mathbf{F}
    \end{bmatrix}\right\|_\Frob^2 \\ \notag
    \text{s.t. } & 
    \begin{bmatrix}
        \mathbf{\hat{\Phi}}^{x} & \mathbf{\hat{\Phi}}^{y}
    \end{bmatrix}
    \begin{bmatrix}
        \mathbf{I - ZA}\\
        -\mathbf{ZC}
    \end{bmatrix}
    = \mathbf{I}
\end{align}

\textbf{Optimality and separation.}
Since the expected cost decomposes exactly as in \eqref{eq:cost_decomposition}, and the controller and observer subproblems depend on disjoint decision variables, the joint minimization of \eqref{eq:lqg_expected_cost} is achieved by independently solving \eqref{eq:J_ctrl} and \eqref{eq:J_obs}. As \eqref{eq:lqg_expected_cost} is equivalent to \eqref{eq:lqg}, 
the resulting output-feedback controller is therefore optimal and admits the classical double-Riccati characterization. Importantly, this stochastic interpretation is used solely to justify the structure and optimality of the unconstrained performance objective. The resulting controller is later combined with robust constraint tightenings that treat disturbances in a worst-case, bounded sense, ensuring that safety guarantees do not rely on stochastic assumptions.

This completes the proof.
\end{proof}

\subsection{Proof of Proposition \ref{prop:robust_constraint_nonlinear}}
\label{appendix:proof_robust_constraint_nonlinear}
\begin{proof}
We follow the same argument as Proposition~\ref{prop:linear_robust_constraint_satisfaction}, with the only change being that the disturbance scalings are now given by the nonlinear-model overbounds $\Sigma_j$ and $\Upsilon_j$, and the nominal point is $(z_k,v_k)$. By Assumptions~\ref{ass:lin_error} and \ref{assum:bounded_perception_error}, whenever
$\|(\Delta x_k,\Delta u_k)\|_\infty \le \tau_k$,
the nonlinear deviation dynamics are contained in the LTV error model
with additive disturbances scaled by $\Sigma_k=\sigma(\tau_k,z_k,v_k)I_{\nx}$, which includes the linearization error.

Fix any time $k$. By the SLS parameterization \eqref{eq:closed_loop_phi} applied to the LTV error model, the deviations satisfy
\[
\begin{bmatrix}\Delta x_k\\ \Delta u_k\end{bmatrix}
=
\sum_{j=0}^{k}\Phi^{\w}_{k,j}\Sigma_j\,w_j
+\sum_{j=0}^{k}\Phi^{\e}_{k,j}\Upsilon_j\,e_j.
\]
For any constraint row $c_i^\top(x_k,u_k)+b_i\le 0$,
\begin{align*}
c_i^\top(x_k,u_k)+b_i
&= c_i^\top(z_k,v_k)+b_i \\
&\quad + \sum_{j=0}^{k} c_i^\top\Phi^{\w}_{k,j}\Sigma_j\,w_j
      + \sum_{j=0}^{k} c_i^\top\Phi^{\e}_{k,j}\Upsilon_j\,e_j .
\end{align*}

Taking the worst case over $\|w_j\|_\infty\le 1$ and $\|e_j\|_\infty\le 1$ uses the same support-function identity as in Proposition~\ref{prop:linear_robust_constraint_satisfaction}:
\(
\max_{\|d\|_\infty\le 1} a^\top d=\|a\|_1.
\)
Therefore,
\begin{align*}
\max_{w_j,e_j}\big(c_i^\top(x_k,u_k)\big)
&= c_i^\top(z_k,v_k) \\
&\quad + \sum_{j=0}^{k}\|c_i^\top\Phi^{\w}_{k,j}\Sigma_j\|_1
      + \sum_{j=0}^{k}\|c_i^\top\Phi^{\e}_{k,j}\Upsilon_j\|_1 .
\end{align*}
which shows that \eqref{eq:nonlin_constraint_tight} is sufficient (and exact for this affine disturbance model) to guarantee robust satisfaction.

Then, the condition $\|(\Delta x_k,\Delta u_k)\|_\infty\le \tau_k$ is equivalent to
$|\hat e_\ell^\top(\Delta x_k,\Delta u_k)|\le \tau_k$ for all standard basis vectors $\hat e_\ell$.
Applying the same support-function argument coordinate-wise yields
\[
\max_{w_j,e_j}\hat e_\ell^\top\!\begin{bmatrix}\Delta x_k\\ \Delta u_k\end{bmatrix}
\le
\sum_{j=0}^{k}\Big(
\|\hat e_\ell^\top\Phi^{\w}_{k,j}\Sigma_j\|_1
+
\|\hat e_\ell^\top\Phi^{\e}_{k,j}\Upsilon_j\|_1
\Big),
\]
so enforcing \eqref{eq:nonlin_constraint_tight_tau} guarantees $\|(\Delta x_k,\Delta u_k)\|_\infty\le \tau_k$.

Combining both parts for all $k$ and all constraints $i$ completes the proof.
\end{proof}

\subsection{Experiment Details} 
We omit linearization error for simplicity within our implementation, as empirically, the resulting tubes maintain 100\% robust constraint satisfaction, following \cite{zhan2025robustly}. Linearization error can be reintroduced by following the procedure laid out in \cite{Leeman_2025_TAC}. In the following sections, we list the dynamics models as well as important parameters used in the experiments. 

\subsubsection{Cost Function}\label{app:cost_function}
The cost function (\ref{eq:nonlinear_of_sls_cost}) is the sum of a trajectory cost (\ref{eq:full_trajectory_cost}) and a tube cost (\ref{eq:full_tube_cost}) where $x^g$ is the goal state. In the humanoid experiment, we have a reference trajectory instead of a single goal state.

\begin{align} \label{eq:full_nonlinear_of_sls_cost}
    J(\bm{z}, \bm{v}, \bm{\tau}, \mathbf{\Phi}) = & J_{\textrm{traj}} + J_{\textrm{tube}} \\ \notag
    J_{\textrm{traj}}(\bm{z}, \bm{v}) = & \sum_{k=0}^{T-1} ( (z_k-x^g)^\top \bar Q (z_k-x^g) + v_k^\top \bar R v_k) \\ \label{eq:full_trajectory_cost}
    & + (z_T-x^g)^\top \bar P (z_T-x^g) \\ \notag
    J_{\textrm{tube}}(\bm{z}, \bm{v}, \bm{\tau}, \bm{\Phi}) =  
    & \left\|
    \begin{bmatrix}
        \mathbf{Q}^{1/2} & \mathbf{0}\\
         \mathbf{0} &\mathbf{R}^{1/2} \\
    \end{bmatrix}
    \begin{bmatrix}
        \bm{\Phi}^{\mathrm{xw}} & \bm{\Phi}^{\mathrm{xe}}\\
        \bm{\Phi}^{\mathrm{uw}} & \bm{\Phi}^{\mathrm{ue}}\\
    \end{bmatrix}
    \begin{bmatrix}
        \mathbf{E}\\
        \mathbf{F}
    \end{bmatrix}
    \right\|_\Frob^2 \\ \label{eq:full_tube_cost}
    &  
    + \left\|  {P}^{1/2}   \begin{bmatrix}
        \bm{\Phi}^{\mathrm{xw}}_T & \bm{\Phi}^{\mathrm{xe}}_T\\
    \end{bmatrix}    \begin{bmatrix}
        \mathbf{E}\\
        \mathbf{F}
    \end{bmatrix}
    \right\|_\Frob^2
\end{align}
where we use the shorthand notation $\mathbf{E}=\mathbf{E}(\bm{z,v,\tau})$ and $\mathbf{F}=\mathbf{F}(\bm{z,v,\tau})$. 
\subsubsection{Light-Dark}\label{app:light_dark}
We consider 2D single integrator dynamics: 
\begin{equation}
    {\begin{bmatrix}
        \dot p_x \\ \dot p_y
    \end{bmatrix}} = \begin{bmatrix}
        v_x \\ v_y
    \end{bmatrix}
\end{equation}
with state $x = (p_x, p_y) \in \mathbb{R}^2$ and control $u = (v_x, v_y) \in \mathbb{R}^2$. The constraints include state constraints ($x_k \in [-2, 5]\times[-2, 5]$) and control constraints ($u_k \in [-1,1]\times[-1,1]$), as well as terminal state constraints $x_T \in [-0.15,0.15]\times[-0.15,0.15]$. The initial nominal state is $\bar x_0 = (0, 2)$ and goal state is $x^g=(0, 0)$. 
The penalty terms in cost function are $\bar Q =\bar R=I_2, \bar P=50I_2, Q=R=P=10^4I_2$.
We consider initial condition uncertainty and dynamics uncertainty matrix $\Xi=E=0.05I_2$ and observation uncertainty $F(x)=0.02(p_x-2)^2$. The planning horizon is $T = 20$ with a timestep of $0.2$s. 

\subsubsection{Parking-Lot}\label{app:parking_lot}

We consider Dubins' car dynamics:
\begin{equation}\label{eq:dynamics_dubins}
    {\begin{bmatrix}
        \dot p_x \\ \dot p_y \\ \dot \theta \\ \dot v
    \end{bmatrix}} = \begin{bmatrix}
        v\cos\theta \\ v\sin\theta \\ u_1 \\ u_2
    \end{bmatrix}
\end{equation}
with x-position $p_x$, y-position $p_y$, orientation $\theta$, and linear velocity $v$ as states. The state constraints are $[-3.5, 0.5]\times[-2, 1.9]\times[-2\pi,2\pi]\times[-1,2]$ and input constraints are $[-\pi,\pi]\times[-4,4]$. The terminal state constraints are $[0,0.45]\times[0.35,0.65]\times[-2\pi,2\pi]\times[-1,1]$. The initial nominal state is $\bar x_0=(-2.1, -1.75, 0.5\pi, 0)$ and the goal state is $x^g=(0.25, 0.5, 0, 0)$. We consider initial condition uncertainty described by $\Xi=0.01I_4$ and dynamics uncertainty $E = 0.01I_{4}$. The perception uncertainty bound is given by the shading in Fig. \ref{fig:unicycle_perception}b. 
The penalty terms in cost function are $\bar Q=\textrm{diag}([1,1,0,0]), \bar R=0.5I_2,\bar P=\textrm{diag([100,100,0,10])}, Q=10^5I_4, R=10^5I_2, P=3\times 10^6I_4$.
The planning horizon is $T=30$, and the continuous-time dynamics of \eqref{eq:dynamics_dubins} are time-discretized with a timestep of $0.15$s. 

The mapping $p(\cdot)$ is a three-layer MLP with GELU as activation functions. The hidden dimensions are 512. 
The observation matrix $C^r$ learned is 
\begin{align} \notag
    \begin{bmatrix}
        0.976 & 0.132 & -0.172 &  0 \\
        -0.216 & 0.668 & -0.712 & 0\\
        0.0209 & 0.733 & 0.680 &  0
    \end{bmatrix}.
\end{align}
The monomial basis $m(\cdot)$ is third order and the states related to $F^r$ are $p_x$ and $p_y$. 

\subsubsection{Quadrotor}\label{app:quad}

We consider 3D quadrotor dynamics with ten states \cite{singh2018robust}:
\begin{equation}\label{eq:dynamics_quad}
    {\begin{bmatrix}
        \dot p_x \\ \dot p_y \\ \dot p_z \\ \dot v_x \\ \dot v_y \\ \dot v_z \\ \dot \theta_x \\ \dot \theta_y \\ \dot \omega_x \\ \dot \omega_y
    \end{bmatrix}} = \begin{bmatrix}
        v_x \\ v_y \\ v_z \\ g\tan(\theta_x) - 3v_x \\ g\tan(\theta_y) - 3v_y \\  u_z - g  - v_z\\ -10\theta_x + \omega_x \\ -10\theta_y + \omega_y \\ -10\theta_x + 50u_x \\ -10 \theta_y + 50 u_y
    \end{bmatrix}
\end{equation}
The state constraints are $x_T \in [-10, 10]\times[-10, 10]\times[0,10]\times[-5,5]\times[-5,5]\times[-5,5]\times[-2,2]\times[-2,2]\times[-2,2]\times[-2,2]$ and the control constraints are $u_k \in [-5,5]\times[-5,5]\times[0,40]$. 
The initial nominal state and goal state are respectively 
\begin{align} \notag
    \bar x_0 = & (0.9, 3.1, 1, 0.001, 0.001, 0.001, 0.001,\\ \notag 
    & -0.001, -0.001, -0.001) \\ \notag
    x^g = & (-1.5, 2.5, 1.4, 0.001, 0.001, 0.001, -0.001, \\ \notag
    & -0.001, -0.001, -0.001) .
\end{align}
We consider an initial state uncertainty and dynamics error of $\Xi=E=0.01I_{10}$ and a learned state-dependent perception uncertainty map. 
The penalty terms in cost function are $\bar Q=\text{diag}([1, 1, 1, 1, 1, 1, 1, 0, 0, 0]), \bar R=\text{diag}([1,1,0,1]), \bar P=\text{diag}([50, 50, 50, 5, 5, 5, 5, 1, 1, 1]), Q=P=5\times 10^5I_{10}, R=5\times 10^5I_3$. 
The planning horizon is $T=35$, with the continuous-time dynamics of \eqref{eq:dynamics_quad} time-discretized with a timestep of $0.15$s. The runtime of Alg. 1 is $5.592$ seconds over 10 SCP iterations.

We also plot the reachable tubes for each state in Fig. \ref{fig:quadrotor_tubes}.

The mapping $p(\cdot)$ is a nine-layer MLP with GELU as activation functions. The hidden dimensions are 1024. 
The learned observation matrix $C^r$ is 
\begin{align}\small \notag
    \begin{bmatrix}
        0.34 & -0.78 & 0.15 & 0 & 0 & 0 & -0.27 & -0.42 & 0 & 0\\
        -0.47 & -0.62 & -0.22 & 0 & 0 & 0 & 0.24 & 0.53 & 0 & 0\\
        0.78 & -0.04 & -0.33 & 0 & 0 & 0 & 0.42 & 0.32 & 0 & 0\\
        0.07 & -0.05 & 0.89 & 0 & 0 & 0 & 0.40 & 0.20 & 0 & 0\\
        0.21 & 0.04 & 0.18 & 0 & 0 & 0 & -0.72 & 0.63 & 0 & 0
    \end{bmatrix}.
\end{align}
The monomial basis $m(\cdot)$ is fourth order and the states related to $F^r$ are $p_x$, $p_y$, $p_z$, $\theta_x$ and $\theta_y$. 

\begin{figure}
    \centering
    \includegraphics[width=\linewidth]{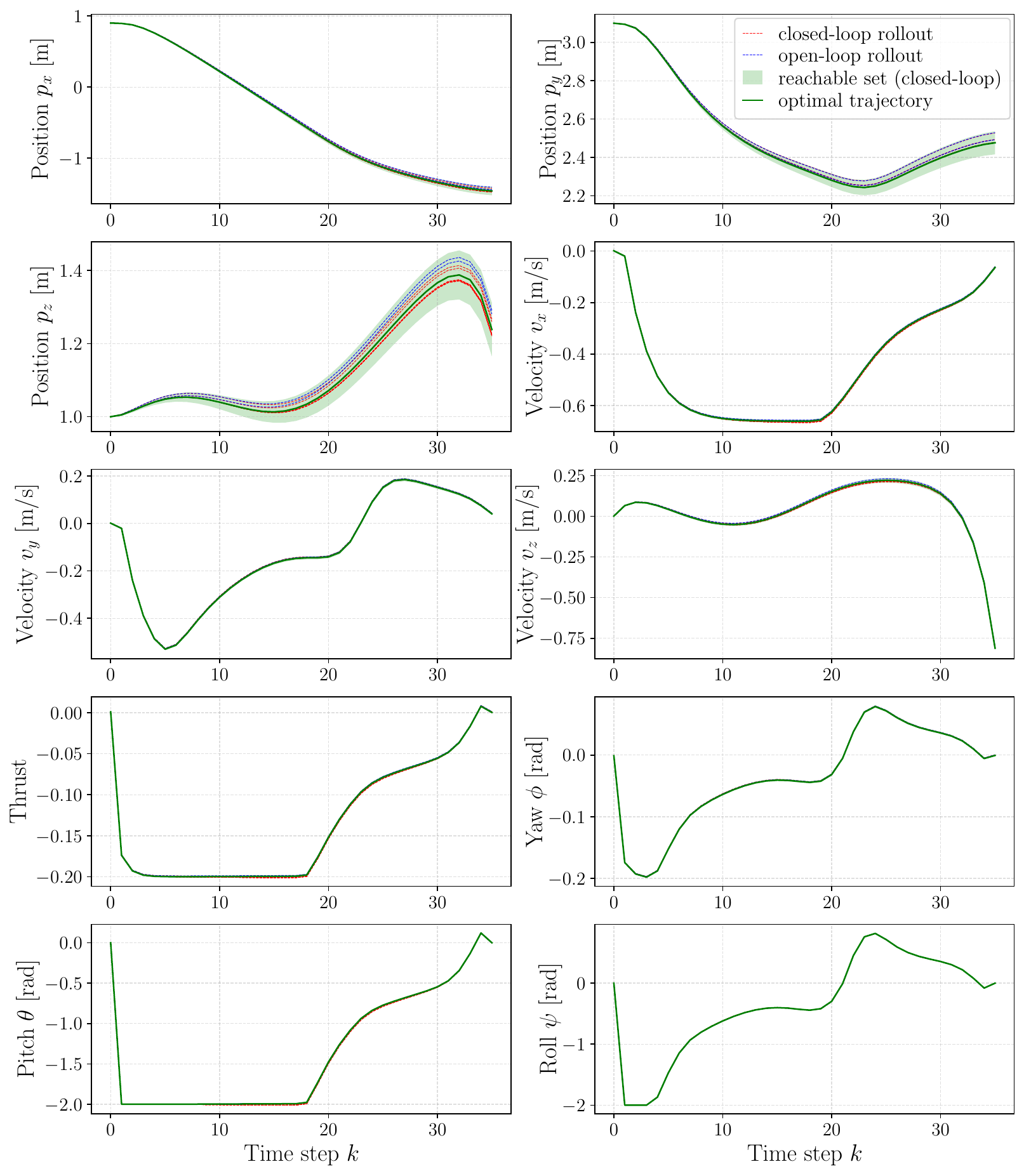}\vspace{-10pt}
    \caption{For quadrotor visuomotor control, we plot the reachable tubes (shaded green) around the planned nominal trajectory (green line) and verify that the closed-loop rollouts (blue) stay within the tubes. Note that our closed-loop rollouts experience smaller deviation from the nominal trajectory compared to the open-loop rollouts, showcasing the utility of our closed-loop controller.
    }\vspace{-15pt}
    \label{fig:quadrotor_tubes}
\end{figure}

\subsubsection{Parking-Lot Perception Error Calibration}
\label{perception_calibration_appendix}

To quantify how well the learned measurement-error bound $\hat e(x)$ captures the true perception residuals, 
we evaluate its empirical coverage on a held-out dataset. 
For each sample $\{x_i, y_i\}$, we compute the residual 
$r_i = \|p(\phi(y_i)) - h^r(x_i)\|_\infty$ 
and report the fraction of points whose residual lies below the predicted bound,
$    \mathrm{Coverage}
  = \tfrac{1}{N}\sum_{i=1}^N \mathbf{1}\{\,r_i \le \hat e(x_i)\}.
$
Empirically, we find that this recalibration is data-efficient: in the parking-lot environment, high empirical coverage is already obtained with a modest number of calibration points (App.~\ref{perception_calibration_appendix}).
Coverage is empirically evaluated on 500 randomly-sampled data points over the state space; detailed empirical coverage rates are presented in Table \ref{table:3}.

\subsubsection{Unitree G1 Humanoid}\label{app:humanoid}
Refer to the URDF of G1 with 23 DOF for joint angle bounds. The pose ($[p_x,p_y,p_z,q_x,q_y,q_z,q_w]$) bounds are $[-10,-10,-10,-1,-1,-1,-1]\times[10,10,10,1,1,1,1]$, linear velocity bounds are all $-[30,30]$, angular velocity bounds are all $[-40\pi, 40\pi]$, and joint velocity bounds are all $[-20\pi,20\pi]$. We constrain the wrist states to be within $[-0.5, 0.5]$. The input (torque) bounds are all $[-500,500]$. 
We consider initial condition and dynamics uncertainty of $\Xi=E=0.05I_{59}$, and perception uncertainty is described by $F=0.001I_{30}$. 
The planning horizon is $T = 60$ and the continuous-time dynamics are time-discretized with timestep $0.01$s. The overall computation time is $11542.795$s, with 20 SCP iterations. We note that these times are dominated by the QP solve step (9445.333s) for performing the trajectory optimization, rather than in the Riccati steps needed to compute the reachable sets; as such, these solve times can be decreased by leveraging, e.g., structure-exploiting optimal control QP solvers \cite{verschueren2022acados} or GPU acceleration \cite{schubiger2020gpu}. For simplicity, for dynamics modeling, we only consider the phase when the robot is not in contact with ground.

In this experiment, we use a slightly different trajectory cost (\ref{eq:full_trajectory_cost_humanoid}) where $x^g$ is a reference trajectory. The penalty terms in cost function are 
\begin{align} \notag
    \bar Q = & \text{diag}([0, 100, 0, 10, 100, 10, 100, \underbrace{75, \dots, 75}_{23}, 0, 100, 0, 100, \\ \notag
    & \hspace{9mm} 100, 100, \underbrace{0.5, \dots, 0.5}_{23}]), \\ \notag
    \bar P = & \text{diag}([0, 100, 50, \underbrace{100, \dots, 100}_{27}, 0, 100, 50, 100, 100, 100, \\ \notag 
    & \hspace{9mm} \underbrace{0, \dots, 0}_{23}]), 
\end{align}
$\bar R=0.001I_{23}, Q=P=10^4I_{59}, R=10^4I_{23}$.

\begin{align} \label{eq:full_trajectory_cost_humanoid}
    J_{\textrm{traj}}(\bm{z}, \bm{v}, \bm{\tau}) = & \sum_{k=0}^{T-1} ( (z_k-x^g_k)^\top \bar Q (z_k-x^g_k) + v_k^\top \bar R v_k) \\ \notag
    & + (z_T-x^g_T)^\top \bar P (z_T-x^g_T) 
\end{align}

Figs.~\ref{fig:humanoid_all_tubes_cl} and~\ref{fig:humanoid_all_tubes_ol_cl} additionally report the full trajectories and reachable tubes for all states and inputs under Algorithm~1. Fig.~\ref{fig:humanoid_all_tubes_cl} only displays the closed-loop tubes for clarity, as some of the open-loop tubes exit the tubes with a large margin and makes visualization difficult; both open-loop and closed-loop are overlaid in Fig. \ref{fig:humanoid_all_tubes_ol_cl}.
\begin{figure*}
    \centering
    \includegraphics[width=0.98\linewidth]{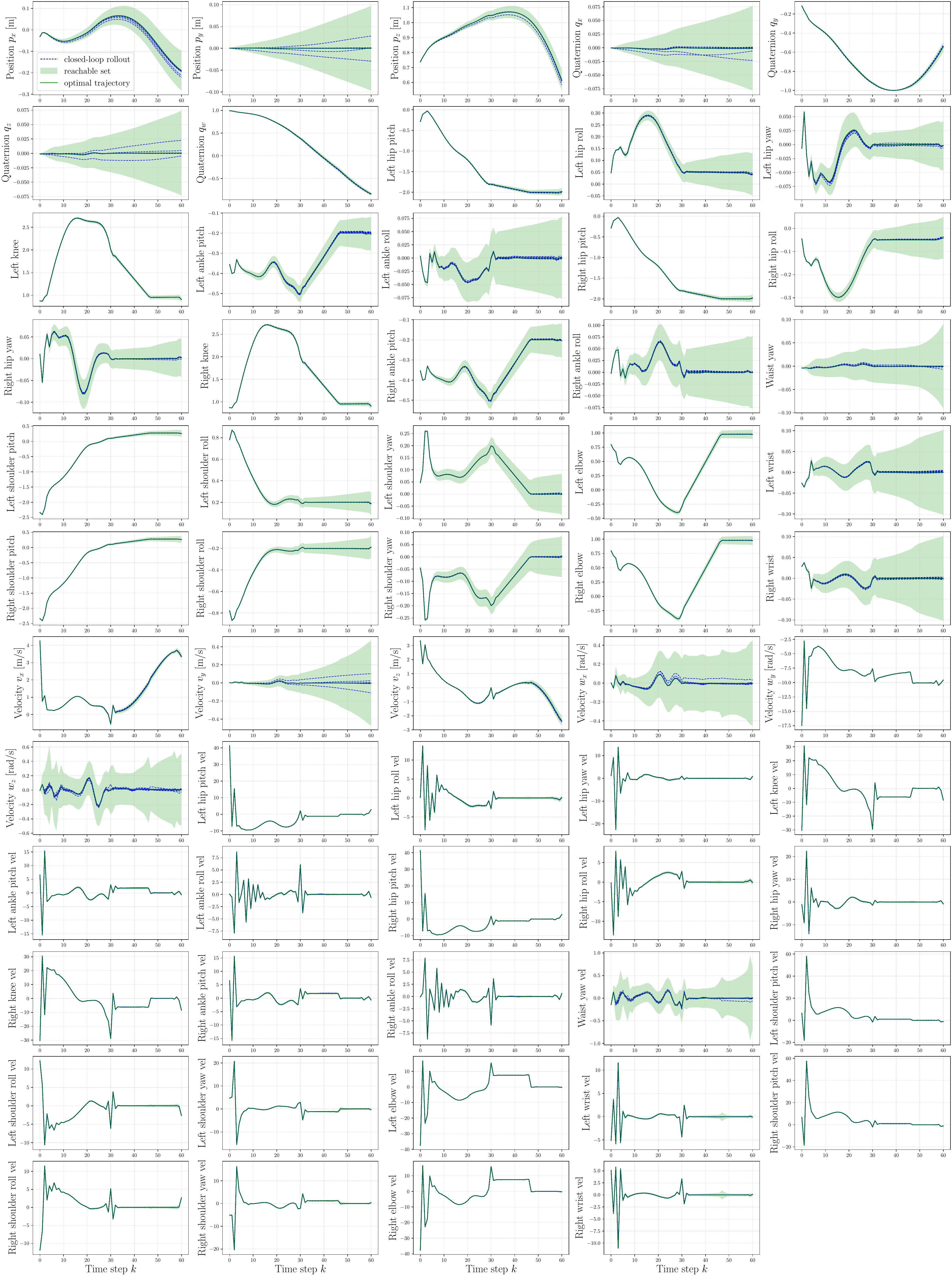}\vspace{-10pt}
    \caption{Humanoid (all tubes, closed-loop only).}
    \label{fig:humanoid_all_tubes_cl}
\end{figure*}

\begin{figure*}
    \centering
    \includegraphics[width=0.98\linewidth]{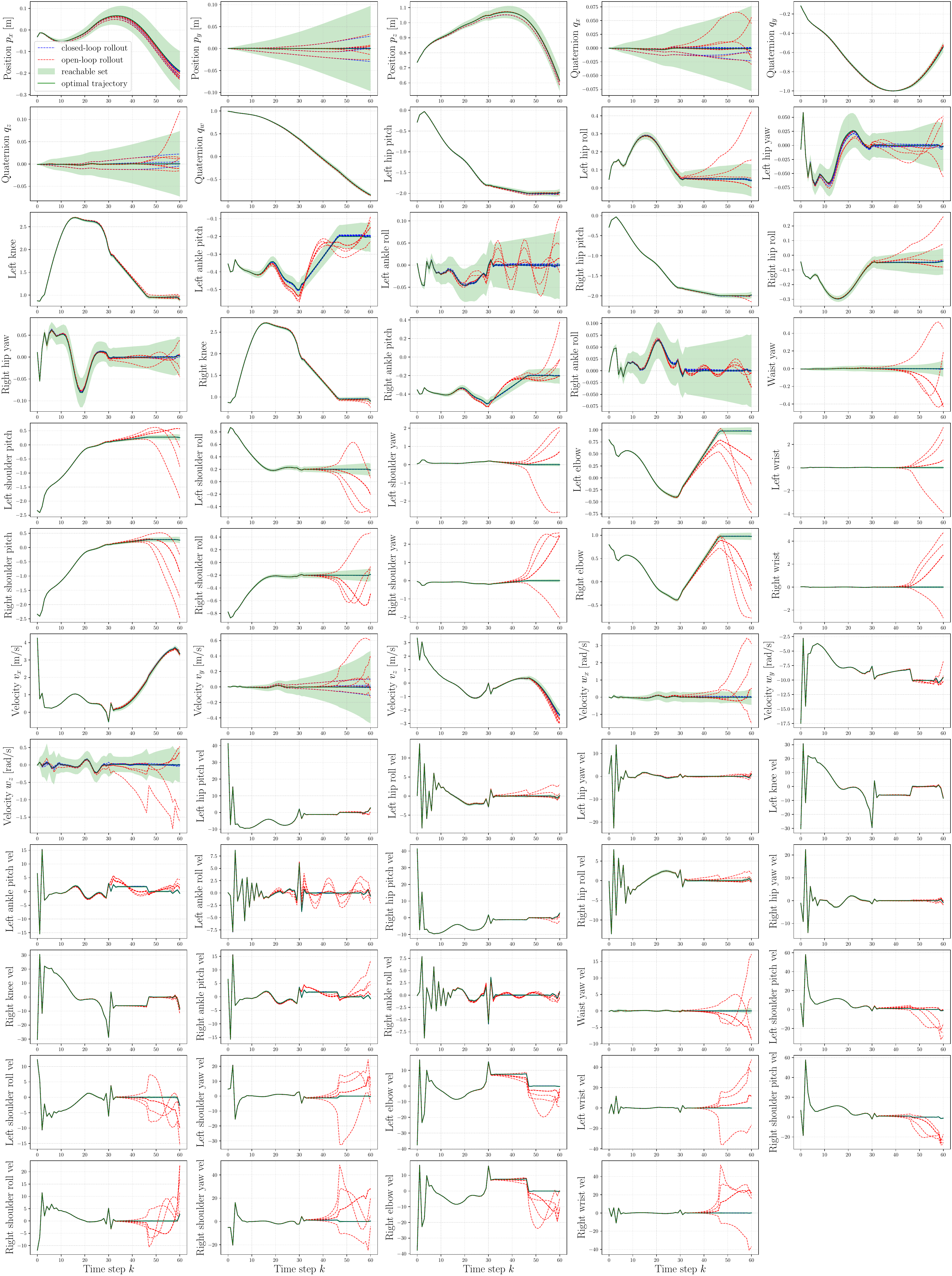}\vspace{-10pt}
    \caption{Humanoid (all tubes, open-loop and closed-loop).
}
    \label{fig:humanoid_all_tubes_ol_cl}
\end{figure*}

\subsubsection{Hardware}\label{app:hardware}

We consider a modified Dubin's car dynamics:
\begin{equation}\label{eq:dynamics_hardware}
    {\begin{bmatrix}
        \dot p_x \\ \dot p_y \\ \dot \theta \\ \dot v \\ \dot \omega
    \end{bmatrix}} = \begin{bmatrix}
        v\cos\theta \\ v\sin\theta \\ \omega \\ u_1 \\ u_2
    \end{bmatrix}
\end{equation}

The Turtlebot robot should satisfy state constraints $[p_x, p_y, p_\theta, v, \omega]\in [-5,5]\times[-5,5]\times[-2\pi,2\pi]\times[-0.2,0.4]\times[-1,1]$ (position, heading, linear velocity, angular velocity), input constraints $[u_1, u_2]\in [-0.1,0.1]\times[-0.2,0.2]$ (linear acceleration, angular acceleration). 

For this experiment, the robot plans over a horizon of $T=60$ from $\bar x_0 = (0,0,0,0,0)$ to a target state $x^g = (3,1.5,0,0,0)$.
The initial condition uncertainty is $\Xi=\textrm{diag}([0.025, 0.025, 0.05, 0.0001, 0.0001])$ and $E=\textrm{diag}([0.01, 0.01, 0.01, 0.0001, 0.0001])$.
The penalty terms in cost function are $\bar Q=\text{diag}([1,1,0,0.1,0.1]), \bar R=0.2I_2, \bar P=\text{diag}([10, 10, 1, 1, 1]), Q=P=5\times 10^6I_5, R=5\times 10^6I_2$.
Planning requires $5.779$s over 20 SCP iterations, and commands are executed at $5$~Hz via ROS~2. 
The mapping $p(\cdot)$ trained from $1096$ image–state pairs collected under varying lighting conditions is a four-layer MLP with GELU as activation functions. The hidden dimensions are 512. 
The observation matrix $C^r$ learned is 
\begin{align} \notag
    \begin{bmatrix}
        -0.184 & 0.008 & -0.983 & 0 & 0\\
        -0.435 & 0.443 & 0.784 & 0 & 0\\
         0.610 & -0.193 & -0.768 & 0 & 0
    \end{bmatrix}.
\end{align}
We use a constant perception error bound $F=0.05I_3$ achieving $99.8\%$ empirical coverage along the trajectory. 

\end{document}